%% 
%% Copyright 2007-2020 Elsevier Ltd
%% 
%% This file is part of the 'Elsarticle Bundle'.
%% ---------------------------------------------
%% 
%% It may be distributed under the conditions of the LaTeX Project Public
%% License, either version 1.2 of this license or (at your option) any
%% later version.  The latest version of this license is in
%%    http://www.latex-project.org/lppl.txt
%% and version 1.2 or later is part of all distributions of LaTeX
%% version 1999/12/01 or later.
%% 
%% The list of all files belonging to the 'Elsarticle Bundle' is
%% given in the file `manifest.txt'.
%% 

%% Template article for Elsevier's document class `elsarticle'
%% with numbered style bibliographic references
%% SP 2008/03/01
%%
%% 
%%
%% $Id: elsarticle-template-num.tex 190 2020-11-23 11:12:32Z rishi $
%%
%%
\documentclass[preprint,12pt]{elsarticle}

%% Use the option review to obtain double line spacing
%% \documentclass[authoryear,preprint,review,12pt]{elsarticle}

%% Use the options 1p,twocolumn; 3p; 3p,twocolumn; 5p; or 5p,twocolumn
%% for a journal layout:
%% \documentclass[final,1p,times]{elsarticle}
%% \documentclass[final,1p,times,twocolumn]{elsarticle}
%% \documentclass[final,3p,times]{elsarticle}
%% \documentclass[final,3p,times,twocolumn]{elsarticle}
%% \documentclass[final,5p,times]{elsarticle}
%% \documentclass[final,5p,times,twocolumn]{elsarticle}

%% For including figures, graphicx.sty has been loaded in
%% elsarticle.cls. If you prefer to use the old commands
%% please give \usepackage{epsfig}

%% The amssymb package provides various useful mathematical symbols
\usepackage{amssymb}
%% The amsthm package provides extended theorem environments
%% \usepackage{amsthm}

%% The lineno packages adds line numbers. Start line numbering with
%% \begin{linenumbers}, end it with \end{linenumbers}. Or switch it on
%% for the whole article with \linenumbers.
%% \usepackage{lineno}
\usepackage{graphicx}
\usepackage{algorithm}
\usepackage[noend]{algorithmic}
\usepackage{amsmath}
\usepackage{amsthm}
\usepackage{multirow}

\setcounter{topnumber}{5}
\setcounter{bottomnumber}{5}
\setcounter{totalnumber}{10}

\newtheorem{example}{Example}
\newtheorem{remark}{Remark}
\newtheorem{theorem}{Theorem}
\newtheorem{lemma}{Lemma}
\newtheorem{claim}{Claim}

\newcommand{\bm}[1]{{\text{\boldmath $#1$}}}

%\journal{Discrete Applied Mathematics}

\begin{document}

\begin{frontmatter}

%% Title, authors and addresses

%% use the tnoteref command within \title for footnotes;
%% use the tnotetext command for theassociated footnote;
%% use the fnref command within \author or \address for footnotes;
%% use the fntext command for theassociated footnote;
%% use the corref command within \author for corresponding author footnotes;
%% use the cortext command for theassociated footnote;
%% use the ead command for the email address,
%% and the form \ead[url] for the home page:
%% \title{Title\tnoteref{label1}}
%% \tnotetext[label1]{}
%% \author{Name\corref{cor1}\fnref{label2}}
%% \ead{email address}
%% \ead[url]{home page}
%% \fntext[label2]{}
%% \cortext[cor1]{}
%% \affiliation{organization={},
%%             addressline={},
%%             city={},
%%             postcode={},
%%             state={},
%%             country={}}
%% \fntext[label3]{}

\title{Query Learning Algorithm for Ordered Multi-Terminal Binary Decision Diagrams}

%% use optional labels to link authors explicitly to addresses:
%% \author[label1,label2]{}
%% \affiliation[label1]{organization={},
%%             addressline={},
%%             city={},
%%             postcode={},
%%             state={},
%%             country={}}
%%
%% \affiliation[label2]{organization={},
%%             addressline={},
%%             city={},
%%             postcode={},
%%             state={},
%%             country={}}

\author{Atsuyoshi Nakamura}
\ead{atsu@ist.hokudai.ac.jp}
\affiliation{organization={Graduate School of Information Science and Technology,
        Hokkaido University},%Department and Organization
            addressline={Kita 14, Nishi 9, Kita-ku}, 
            city={Sapporo},
            postcode={060-0814}, 
            state={Hokkaido},
            country={Japan}}

\begin{abstract}
  We propose a query learning algorithm for ordered multi-terminal binary decision diagrams (OMTBDDs)
  using at most $n$ equivalence and $2n(\lceil\log_2 m\rceil + 3n)$ membership queries by extending the algorithm for ordered binary decision diagrams (OBDDs).
  Tightness of our upper bounds is checked in our experiments using synthetically generated target OMTBDDs.
  Possibility of applying our algorithm to classification problems is also indicated in our other experiments using datasets of UCI Machine Learning Repository.
\end{abstract}

%%Graphical abstract
%\begin{graphicalabstract}
%\includegraphics{grabs}
%\end{graphicalabstract}

%%Research highlights
%\begin{highlights}
%\item An efficient query learning algorithm is proposed for OMTBDDs.
%\item Correctness of the proposed algorithm is guaranteed theoretically.
%\item Query complexity of the algorithm is analyzed.
%\item Possible application to classification problem is indicated through experiments.
%\end{highlights}

\begin{keyword}
%% keywords here, in the form: keyword \sep keyword
  query learning \sep sample complexity \sep OMTBDD
%% PACS codes here, in the form: \PACS code \sep code

%% MSC codes here, in the form: \MSC code \sep code
%% or \MSC[2008] code \sep code (2000 is the default)

\end{keyword}

\end{frontmatter}

%% \linenumbers

%% main text
\section{Introduction}

A \emph{binary decision diagram (BDD)} is a representation of a Boolean function, and it is known to be compact for many functions
and easy to be manipulated \citep{B92}.
It is a kind of directed acyclic graph (DAG) that has a root and two sinks labeled $0$ and $1$.
Each non-sink node is labeled a Boolean variable $x_i$ and assignment $x_i=a_i$ for the variables decides a path from the root to one of sinks
by selecting $a_i$-labeled outgoing edge at the node labeled $x_i$.
If the sequence of labeled variables on any path from the root to one of sinks in the passing order is consistent with some variable order,
then it is called an \emph{ordered BDD (OBDD)}.
An OBDD is very popular due to its good property that any Boolean function can be represented by a unique \emph{reduced OBDD}
for any fixed variable order.

A \emph{multi-terminal binary decision diagram (MTBDD)} \citep{F97} is an extension of BDD so as to represent a multi-valued function of Boolean variables.
The structural difference is the number of sinks only; an MTBDD can have more than two sinks.
An \emph{ordered MTBDD (OMTBDD)} is an MTBDD with variable labels of non-sink nodes restricted as an OBDD.
An OMTBDD is known to inherit good properties such as the unique reduced form from an OBDD.

In this paper, we extend query learning algorithm for an OBDD to that for an OMTBDD.
\iffalse
and apply it to convert a decision forest to an OMTBDD.
Motivation of this research is recent popularity of efficient implementation investigation of machine learning algorithms.
A decision forest is a popular ensemble classifier but its size tends to become large, as a result,
calculation using it becomes a time and space consuming task.
Thus, its representation that is compact and easy to manipulate, is preferable.
We believe an OMTBDD is a candidate of such representation of a decision forest.
It is also suitable for hardware implementation.
For real-valued features, typical branching condition of decision forest is the form of $x<\theta$,
and such condition can be seen as a condition on a Boolean variable $y$ defined as $y=1$ if $x<\theta$
and $y=0$ otherwise. Thus, a decision forest of real-valued features can be seen as a function of Boolean variables
if preprocessing of appropriate data conversion is done.
\fi
Query learning that we adopt here is a learning using equivalence and membership queries \citep{A88}.
In the study of query learning for function class $\mathcal{F}$,
we develop an efficient algorithm that can identify an unknown target function $f$ in $\mathcal{F}$ using queries allowed to ask.
An equivalence query $\text{EQ}(h)$ for hypothesis $h\in \mathcal{F}$ of the learner's choice is a query to ask whether $h=f$ or not and the answer to the query
is 'YES' if $h=f$ and 'NO' otherwise. In the case with answer 'NO', the learner also obtains counterexample $e$ for which $h(e)\neq f(e)$.
A membership query is a query to ask the function value $f(a)$ for an assignment $a$ of the learner's choice, and the value $f(a)$ is answered.
The oracle that answers to membership queries can be realized as a blackbox function
but the oracle that answers to equivalence queries cannot be realized easily.
Stochastic testing $h(x)=f(x)$ for randomly sampled $x$ is known to be enough for probably approximately correct (PAC) learning instead of identification \citep{A88}.
\iffalse
In this paper, more optimistic approach is taken for the application of converting a random forest to an OMTBDD;
for the input random forest, we use all its training data $x$ to check $h(x)=f(x)$.
\fi

Our query learning algorithm for an OMTBDD, called QLearn-OMTBDD, is an extension of QLearn-$\pi$-OBDD \citep{N05} that identifies a target OBDD using equivalence and membership queries.
In the algorithms, we use (node) classification tree $T_i$ to classify a partial assignment $x_1=a_1,x_2=a_2,\dots,x_{i-1}=a_{i-1}$
into an internal node labeled $x_i$ in the hypothesis OMTBDD or $\mu$ in order to judge which node the partial assignment reach or no reachable node (in the case with $\mu$) in it. The judgment by the classification tree is done depending on the answers to the membership queries.
Since the number of possible answers for membership queries increases from two to $K$ in the case with an OMTBDD with $K$ sinks,
the structure of the trees must be modified, which forces some modifications of their update procedure.
Especially, there is a case that no existing edge label corresponds to the answer to a membership query at a node in a classification tree, which never occurs for OBDDs. 
In such case, the trees must be updated so as to include such edge.

We prove that an arbitrary target reduced OMTBDD can be learned using at most $2n(\lceil\log_2 m\rceil + 3n)$ membership queries and at most $n$ equivalence queries by Algorithm QLearn-OMTBDD, which are exactly the same upper bounds for OBDDs shown in \citep{N05}. The tightness of the upper bounds on these query complexities is checked in our experiments using synthetically generated OMTBDDs. We also check applicability of our algorithm to classification problem.
Our multi-terminal extension enables OMTBDDs to represent multi-class classifiers.
Even real-valued features can be converted to binary features whose value represents above or below some fixed threshold.
OMTBDD representations for classifiers are useful in the point that various operations with some condition-represented OMTBDDs are possible.
Through our experiments using 12 real-valued feature datasets in UCI Machine Learning Repository,
we show a way of constructing an OMTBDDs by query learning from a tree-based classifier using its training data that are correctly predicted by the classifier, where the classifier is used for answering to membership queries and consistency with the training data is replaced with identification by equivalence queries.
As for tree-based classifiers, we use decision trees and random forests.
On decision tree classifiers, significant accuracy deterioration cannot be observed except for one dataset, and the number of nodes are kept at most the same number for $5$ among $12$ datasets.
For two datasets, the number of nodes in the learned OMTBDDs is smaller than that in the original decision trees even in the trees whose leaves are shared among those with the same label.
On random forest classifiers, accuracy deterioration is observed except two datasets but significant reduction in the number of nodes is achieved for $5$ of $10$ datasets\footnote{As for two datasets, epileptic seizure and magic datasets, OMTBDDs could not be learned from random forests due to their large number of nodes.}.
As for two of the $5$ successfully reduced datasets,  accuracy of the OMTBDD learned from a random forest is better than that of the decision tree. Our results on classification problem for these benchmark datasets indicate possibility of
application of our query learning algorithm.

\paragraph{Related Work}

Query learning is proposed by \citet{A88}, and the algorithm for learning deterministic finite automata (DFAs) is one of the most famous query learning 
algorithms using equivalence and membership queries.
In the query learning for DFAs, \citet{KV94} used a classification tree instead of an observation table used in Angluin's algorithm.
\citet{GG95} extended Angluin's query learning algorithm for DFAs to the algorithm for OBDDs.
\citet{N05} reduced the number of membership queries used for learning an OBDD by a factor of $O(m)$ by using classification trees,
where $m$ is the number of variables. The ZDD-version of Nakamura's algorithm was developed by \citet{MTDYS17}.

\section{Preliminaries}

A \emph{multi-terminal binary decision diagram} (MTBDD)
is an extension of a binary decision diagram (BDD)
that can represent a function of more than 2 values from domain $\{0,1\}^m$.
Let $\{0,\dots,K-1\}$ for $K\geq 2$ represent the set of values of $K$-valued functions.
An MTBDD representing a $K$-valued function is a directed acyclic graph with one root and 
at most $K$ sinks, nodes labeled $0,\dots,K-1$.
The simplest MTBDD is composed of one sink that is also the root, and it represents a constant function.
Other MTBDDs have at least two sinks and one internal (non-sink) node.
Each internal node is labeled a Boolean variable and has two
outgoing edges, $0$-labeled and $1$-labeled edges.
An \emph{ordered MTBDD (OMTBDD)} denotes an MTBDD in which
the sequence of variables labeling nodes on any path from the root to one of sinks
must be consistent with a certain preset order. 
The variable order is fixed to $x_1,x_2,...,x_m$ in this paper.

Given an assignment $x_1=a_1,\dots,x_m=a_m$,
the value of the function represented by an OMTBDD at the assignment is calculated as follows:
starting from its root,
selecting the $a_i$-labeled outgoing edge at a node labeled $x_i$
and taking value $j\in \{0,\dots,K-1\}$ that is the label of the finally-reached sink.
An assignment $x_1=a_1,\dots,x_m=a_m$ is also represented by the binary string $a_1a_2\dots a_m$.

We use bold letters like $\bm{a},\bm{b}$ to represent strings and the length of any string $\bm{a}$ is denoted as $|\bm{a}|$.
The concatenated string of $\bm{a}$ and $\bm{b}$ is denoted as $\bm{a}\cdot\bm{b}$, which is sometimes abbreviated as $\bm{a}\bm{b}$.
For a string $\bm{a}$, $\mathrm{pre}(\bm{a},i)$ and $\mathrm{suf}(\bm{a},i)$ are the prefix and suffix strings of $\bm{a}$ with length $i$, respectively.
For two length-$i$ strings $\bm{a},\bm{b}$ and a natural number $j<i$,  $\mathrm{cro}(\bm{a},\bm{b},j)$ denote the length-$i$ string constructed by concatenating $\mathrm{pre}(\bm{a},i-j)$ and $\mathrm{suf}(\bm{b},j)$.
  We abuse the notation, and the function represented by an OMTBDD $D$ is also denoted as $D$,
  so $D(a_1,a_2,\dots,a_m)$ is the value of $D$ for the assignment $x_1=a_1,\dots,x_m=a_m$ and
  it is also written as $D(\bm{a})$ for $\bm{a}=a_1a_2\cdots a_m$. 

For node $N$ in OMTBDD $D$, an {\em access string of node $N$} is a string $a_1a_2\cdots a_{i-1}$ such that
the path on $D$ for assignment $x_1=a_1, x_2=a_2,\dots,x_{i-1}=a_{i-1}$ just reaches node $N$.
The length of the access string is $i-1$ for nodes with labeled $x_i$ and $m$ for sinks.
We let $\mbox{nodes}_i(D)$ denote the set of length-$(i-1)$ access strings for nodes in $D$ and let $\mbox{nodes}(D)=\bigcup_{i=1}^{m+1}\mbox{nodes}_i(D)$.
Then, for $\bm{a},\bm{b}\in \mbox{nodes($D$)}$, an equivalence relation
`$\stackrel{D}{=}$' is defined as follows:
\[
\bm{a} \stackrel{D}{=} \bm{b} \stackrel{def}{\Leftrightarrow}
\mbox{$\bm{a}$ and $\bm{b}$ are access strings to the same node in $D$} .
\]
Let $[\bm{a}]$ denote the equivalence class of $\bm{a}$.
We call a subset $V$ of $\text{nodes}(D)$ a {\em node id set of $D$} if $V$ has just one access string per each node in $D$.
For a node id set $V$ of $D$, let $V_i$ denote the set of access strings in $V$ whose length is $(i-1)$.
Then $V=\bigcup_{i=1}^{m+1}V_i$, and $\mbox{nodes}_i(D)$ can be partitioned into $\{[\bm{v}]\mid \bm{v}\in V_i\}$ and $\mbox{nodes}(D)$ can be also partitioned into $\{[\bm{v}]\mid \bm{v}\in V\}$. We use $\bm{v}\in V$ as a node id, and let `node $\bm{v}$' mean the node with access string $\bm{v}$.

The learning framework we consider is the {\em query learning} proposed by
\citet{A88}.
In this framework, an unknown target function $f$ is identified 
using {\em equivalence queries} and {\em membership queries}.
An {\em equivalence query} asks 
if $f$ is equivalent to a hypothesis $h$; 
if so, `YES' is returned,
and if not, `NO' and a counterexample $e$ ($f(e)\neq h(e)$) are returned.
We treat an equivalence query as a function $\mbox{EQ}$ from a hypothesis $h$ to a pair of an answer and a counterexample $(\mbox{Ans},e)$ for $\mbox{Ans}\in \{\mbox{'YES'},\mbox{'NO'})$,
and let $\mbox{EQ}(h)$ represent $(\mbox{Ans},e)$.
A {\em membership query} asks
for the value $f(a)$ of the target function $f$ for an assignment $a$.

\section{Node Classification Trees}

For each $i=1,2,\dots,m+1$, let $\mathcal{S}_i$ be the set of $\{0,1\}$-strings with length $i-1$.
Let $D$ be an OMTBDD and let $V$ be a node id set of $D$.
Then, $\mathcal{S}_i$ can be partitioned as $\mathcal{S}_i=\bigcup_{\bm{v}\in V_i}[\bm{v}]\cup \left(\mathcal{S}_i\setminus \bigcup_{\bm{v}\in V_i}[\bm{v}]\right)$, where $\mathcal{S}_i\setminus\bigcup_{\bm{v}\in V_i}[\bm{v}]$ is the set of length-$(i-1)$ strings that reach none of the nodes in $D$.

If an OMTBDD $D$ and its node id set $V$ are given, we can easily answer that a given $\bm{a}=a_1a_2\cdots a_{i-1}\in \mathcal{S}_i$ reaches node $\bm{v}\in V_i$ or does not reach any node $\bm{v}\in V_i$ from the path in $D$ for assignment $x_1=a_1,x_2=a_2,\dots,x_{i-1}=a_{i-1}$.
Then, can we answer the same question when $D$ is not given but we can ask membership queries for $D$?
If $D$ is reduced, the answer is yes.
The reason is as follows.

Consider the OMTBDD $D_{\bm{v}}$ which is the subgraph reachable from node $\bm{v}\in V_i$.
OMTBDD $D_{\bm{v}}$ can be seen as a $K$-valued function over $(x_i,x_{i+1},\dots,x_m)\in \{0,1\}^{m-i+1}$, that is,
$D_{\bm{v}}(x_i,x_{i+1},\dots,x_m)=D(v_1,v_2,\dots,v_{i-1},x_i,x_{i+1},\dots,x_m)$ for $\bm{v}=v_1v_2\cdots v_{i-1}$.
Let node $\bm{v}_a\in V$ be the node in which the $a$-labeled edge outgoing from node $\bm{v}$ comes.
Then, $D_{\bm{v}}(0,x_{i+1},\dots,x_m)$ and $D_{\bm{v}}(1,x_{i+1},\dots,x_m)$ are functions represented by $D_{\bm{v}_0}$ and $D_{\bm{v}_1}$, respectively.
If $D$ is reduced, $D_{\bm{v}}(0,x_{i+1},\dots,x_m)$ and $D_{\bm{v}}(1,x_{i+1},\dots,x_m)$ must be different because, otherwise, the further reduction is possible;
node $\bm{v}$ can be removed, and its incoming edge can directly come in node $\bm{v}_0$ (or $\bm{v}_1$).
Thus, there must be a string $a_{i+1}a_{i+2}\cdots a_m$ for which $D_{\bm{v}}(0,a_{i+1},\dots,a_m)\neq D_{\bm{v}}(1,a_{i+1},\dots,a_m)$.
Let $\bm{r}^{(\bm{v})}$ be one of such strings $0a_{i+1}a_{i+2}\cdots a_m$ and $1a_{i+1}a_{i+2}\cdots a_m$, and let $\dot{\bm{r}}^{(\bm{v})}$ denote the string that is made by flipping the first bit of $\bm{r}^{(\bm{v})}$, that is, the other one of them.
If $\bm{a}\in\mathcal{S}_i\setminus \bigcup_{\bm{v}\in V_i}[\bm{v}]$, then $D(\bm{a}\bm{r}^{(\bm{v})})=D(\bm{a}\dot{\bm{r}}^{(\bm{v})})$ holds for all $\bm{v}\in V_i$,
thus $D(\bm{a}\bm{r}^{(\bm{v})})\neq D_{\bm{v}}(\bm{r}^{(\bm{v})})$ or $D(\bm{a}\dot{\bm{r}}^{(\bm{v})})\neq D_{\bm{v}}(\dot{\bm{r}}^{(\bm{v})})$ holds for all $\bm{v}\in V_i$.
From the above fact, 
\begin{equation}
\bm{a}\in\mathcal{S}_i\setminus \bigcup_{\bm{v}\in V_i}[\bm{v}] \Leftrightarrow D(\bm{a}\bm{r}^{(\bm{v})})\neq D(\bm{v}\bm{r}^{(\bm{v})}) \text{ or } D(\bm{a}\dot{\bm{r}}^{(\bm{v})})\neq D(\bm{v}\dot{\bm{r}}^{(\bm{v})}) \text{ for all } \bm{v}\in V_i \label{mu-condition}
\end{equation}
holds. This means that, if we know $\bm{r}^{(\bm{v})}, D(\bm{v}\bm{r}^{(\bm{v})})$ and $D(\bm{v}\dot{\bm{r}}^{(\bm{v})})$ for all $\bm{v}\in V_i$,
we can check whether $\bm{a}\in\mathcal{S}_i\setminus \bigcup_{\bm{v}\in V_i}[\bm{v}]$ holds or not by asking membership queries for $\bm{a}\bm{r}^{(\bm{v})}$ and $\bm{a}\dot{\bm{r}}^{(\bm{v})}$ for all $\bm{v}\in V_i$. Note that $D(\bm{a}\bm{r}^{(v)})=D(\bm{v}\bm{r}^{(\bm{v})})$ and $D(\bm{a}\dot{\bm{r}}^{(\bm{v})})=D(\bm{v}\dot{\bm{r}}^{(\bm{v})})$ might hold for $\bm{a}\in [\bm{v}']$ with $\bm{v}'\stackrel{D}{\neq} \bm{v}$, and $\bm{r}^{(\bm{v}')}$ for $\bm{v}'\stackrel{D}{\neq} \bm{v}$ can be equal to $\bm{r}^{(\bm{v})}$.
Even though $D(\bm{a}\bm{r}^{(\bm{v})})=D(\bm{v}\bm{r}^{(\bm{v})})$ and $D(\bm{a}\dot{\bm{r}}^{(\bm{v})})=D(\bm{v}\dot{\bm{r}}^{(\bm{v})})$ holds for $\bm{a}\in [\bm{v}']$ with $\bm{v}'\stackrel{D}{\neq} \bm{v}$,
we can know whether $\bm{a}\in [\bm{v}]$ or not by asking membership queries if $D$ is reduced.
If $D$ is reduced, then for any two nodes $\bm{v},\bm{v}'\in V_i$, there is a string $a_ia_{i+1}\cdots a_m$ such that
$D_{\bm{v}}(a_i,a_{i+1},\dots,a_m)\neq D_{\bm{v}'}(a_i,a_{i+1},\dots,a_m)$ because, if not, $D_{\bm{v}}=D_{\bm{v}'}$ holds, then further reduction is possible;
node $\bm{v}'$ can be removed and all its incoming edges can come in node $\bm{v}$.
Let $r^{(\bm{v},\bm{v}')}$ denote the string $a_ia_{i+1}\cdots a_m$ with $D_{\bm{v}}(a_i,a_{i+1},\dots,a_m)\neq D_{\bm{v}'}(a_i,a_{i+1},\dots,a_m)$.
Then, we can check whether $\bm{a}\in [\bm{v}]$ or not by asking membership queries at $\bm{a}\bm{r}^{(\bm{v},\bm{v}')}$ for all $\bm{v}'\stackrel{D}{\neq} \bm{v}$
that satisfies $D(\bm{v}'\bm{r}^{(\bm{v})})=D(\bm{v}\bm{r}^{(\bm{v})})$ and $D(\bm{v}'\dot{\bm{r}}^{(\bm{v})})=D(\bm{v}\dot{\bm{r}}^{(\bm{v})})$.

From the above discussion, we can construct {\em node classification trees} $T_i$ ($i=1,\dots,m$)
that classifies $\bm{a}\in \mathcal{S}_i$ into $\bm{v}\in V_i$ or $\mu$,
where $\mu$ means $\bm{a}\in \mathcal{S}_i\setminus \bigcup_{\bm{v}\in V_i}[\bm{v}]$.
\begin{figure}[tb]
\begin{center}
\includegraphics[height=4.4cm]{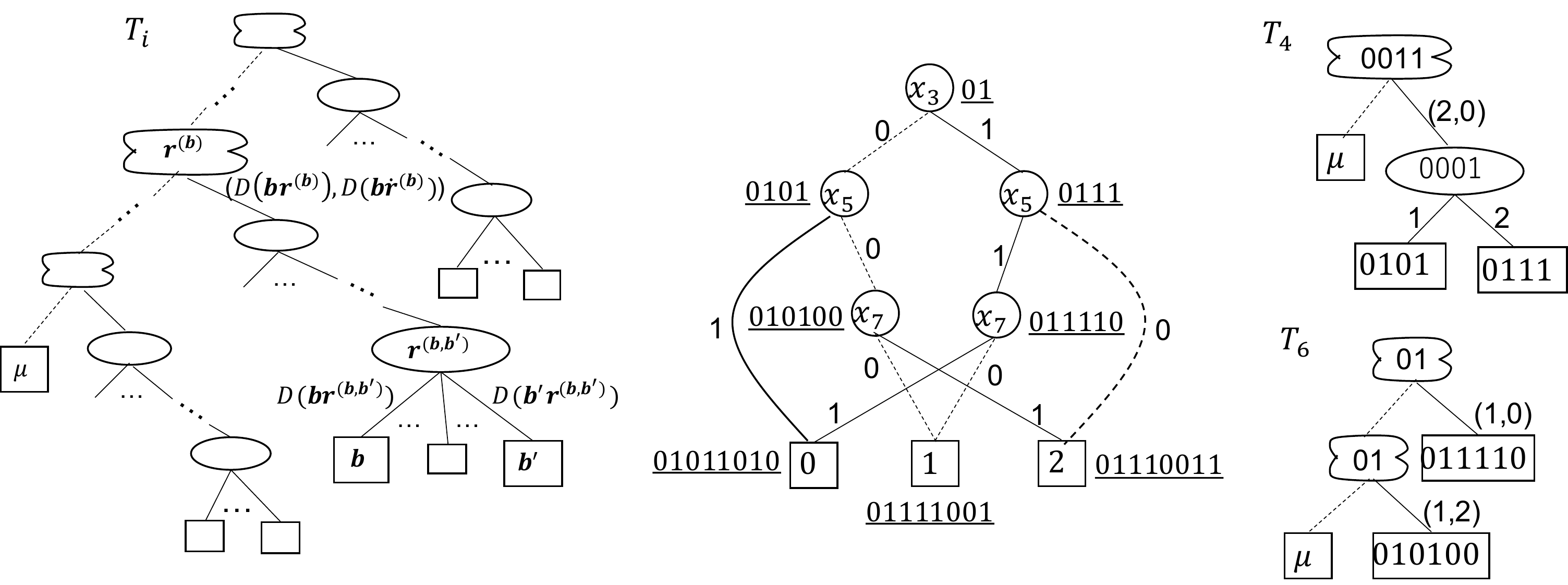}
\end{center}
\caption{The form of a node classification tree (left) and its instance (right) for an example OMTBDD with the access strings of nodes (center).}\label{fig:classificationtree}
\end{figure}
The leftmost figure in Figure~\ref{fig:classificationtree} is the form of a node classification tree $T_i$.
It is a rooted tree and composed of two types of internal nodes and leaf nodes.
One type of internal nodes is a {\em twin-test node} and the other type is a {\em single-test node}.
A twin-test node has label $\bm{r}^{(\bm{v})}$ for some $\bm{v}\in V_i$ and two membership queries
for $\bm{a}\bm{r}^{(\bm{v})}$ and $\bm{a}\dot{\bm{r}}^{(\bm{v})}$ is asked to classify string $\bm{a}\in \mathcal{S}_i$.
Each twin-test node has two outgoing edges, one is labeled $(D(\bm{v}\bm{r}^{(\bm{v})}),D(\bm{v}\dot{\bm{r}}^{(\bm{v})}))$ and coming in a single-test node or a leaf,
and the other is non-labeled and coming in a twin-test node or a leaf labeled $\mu$.
A single-test node has label $\bm{r}^{(\bm{v},\bm{v}')}$ for some $\bm{v},\bm{v}'\in V_i$  and one membership
query for $\bm{a}\bm{r}^{(\bm{v},\bm{v}')}$ is asked to classify string $\bm{a}\in \mathcal{S}_i$.
Each single-test node has at most $K$ outgoing edges labeled $\ell\in \{0,1,\dots,K-1\}$ that comes in a single-test node
or a leaf labeled some $\bm{v}''\in V_i$.
If the edge comes in a leaf labeled $\bm{v}''$, the label of the edge must be $D(\bm{v}''\bm{r}^{(\bm{v},\bm{v}')})$.
Tree $T_i$ has at most $|V_i|+1$ leaves and each of them is labeled $\bm{v}\in V_i$ or $\mu$.
Distinct leaves must have different labels.
If a $\mu$-labeled leaf exists, all the nodes on the path from the root to the $\mu$-labeled leaf
must be twin-test nodes and all the twin-test nodes must appear on the path, which guarantees the satisfaction of the righthand side condition of (\ref{mu-condition}).

Classification of $\bm{a}\in \mathcal{S}_i$ into $V_i\cup\{\mu\}$ can be done by using
the node classification tree $T_i$ as follows.
Start from the root.
At a twin-test node labeled $\bm{r}^{(\bm{v})}$, ask two membership queries for $\bm{a}\bm{r}^{(\bm{v})}$ and $\bm{a}\dot{\bm{r}}^{(\bm{v})}$, and select the edge labeled $(D(\bm{a}\bm{r}^{(\bm{v})}),D(\bm{a}\dot{\bm{r}}^{(\bm{v})}))$ if such labeled edge exists,
otherwise select the non-labeled edge.
At a single-test node labeled $\bm{r}^{(\bm{v},\bm{v}')}$, ask one membership query for $\bm{a}\bm{r}^{(\bm{v},\bm{v}')}$,
and select the edge labeled $D(\bm{a}\bm{r}^{(\bm{v},\bm{v}')})$.
Label of $\bm{a}$ classified by $T_i$, denoted as $T_i(\bm{a})$, is the label of the leaf that is reached finally
repeating the above operations.

\begin{example}
  Consider the OMTBDD $D$ shown in the center of Figure~\ref{fig:classificationtree}.
  The underlined string beside each node is its access string in some node id set $V$ of $D$.
  Node classification trees $T_4$ and $T_6$ of this OMTBDD are shown in the right of the figure.
  $T_4$ is composed of one twin-test node, one single-test node and three leaves including $\mu$-labeled leaf.
  $T_6$ is composed of two twin-test nodes and three leaves.
  String $1100\in \mathcal{S}_4$ reaches node $0101$ in $D$.
  Since $D(11000011)=2$, $D(11001011)=0$ and $D(11000001)=1$, it is classified into $0101$ by $T_4$, which coincides with the reached node in $D$ by the assignment $(x_1,x_2,x_3,x_4)=(1,1,0,0)$.
  String $101001\in \mathcal{S}_6$ does not reach any nodes in $D$.
  Since $D(10100101)=D(10100111)=2$, it is classified into $\mu$ by $T_6$, which also coincides with nonexistence of nodes reached by the assignment $(x_1,x_2,x_3,x_4,x_5,x_6)=(1,0,1,0,0,1)$.
\end{example}

\begin{remark}
  Node classification trees for an OMTBDD are different from classification trees \citep{N05} for an OBDD
  in edge labels. In a classification tree $T_i$ for an OBDD, the label of an edge outgoing from a twin-test node labeled $\bm{r}$
  is $0$ or $1$, and $1$-labeled edge is selected for test string $\bm{a}\in \{0,1\}^{i-1}$ if and only if
  $D(\bm{a}\bm{r})=1$ and $D(\bm{a}\dot{\bm{r}})=0$. In the case with an OMTBDD, the number of combinations
  of different two function values can be more than one, so we adopt label $(j,j')\in \{0,\dots,K-1\}^2$ 
  and no label instead of $1$ and $0$, respectively; $(j,j')$-edge is selected for $\bm{a}$ if and only if
  $(D(\bm{a}\bm{r}),D(\bm{a}\dot{\bm{r}}))=(j,j')$.
  The label of an edge outgoing from a single-test node
  is naturally extended from $j\in \{0,1\}$ to $j\in \{0,\dots,K-1\}$, and $j$-labeled edge is selected if and only if
  $D(\bm{a}\bm{r})=j$. 
\end{remark}

\section{Algorithm}

We extend algorithm QLearn-$\pi$-OBDD \citep{N05} for OBDDs to an algorithm for OMTBDDs.
Starting from a simple hypothesis OBDD,
QLearn-$\pi$-OBDD repeatedly asks an equivalence query for the current hypothesis OBDD
and updates it using the obtained counterexample from the query
until 'YES' is returned to the equivalence query.
The basic structure of the extended algorithm is the same as QLearn-$\pi$-OBDD.
In this section, we explain how to extend QLearn-$\pi$-OBDD.

\subsection{Hypothesis Data Structure}

In QLearn-$\pi$-OBDD, the current hypothesis is stored with additional information so as to be updated easily.
It keeps the current hypothesis as an {\em OBDD with access strings (OBDDAS)} and (node) classification trees.
In order to deal with more than two function values, node classification trees must be modified, and they are already extended in the previous section.
In the followings, we describe extension of OBDDASs for OBDDs to those for OMTBDDs,
which does not need modification except the number of sinks.

An {\em OMTBDD with access strings (OMTBDDAS)} $S$ represents some OMTBDD, which is denoted by $\mathcal{D}(S)$, and stores additional information
at edges and nodes. Differences from an OMTBDD are followings:

\noindent

  \begin{itemize}
  \item
\begin{minipage}[t]{0.6\textwidth}
    Its root is always a node labeled $x_1$,
which may be a dummy node having only one outgoing edge.
\end{minipage}
\vspace*{-2.0cm}
\item
\begin{minipage}[t]{0.6\textwidth}
  Labels of edges are binary strings instead of $0$ or $1$.
The length of the edge label string 
between nodes labeled $x_i$ and $x_j$ is $|i-j|$.
If an edge goes out from the node labeled $x_i$ to a sink, then
its length is $m+1-i$.
The first bits of two edges going out from the same node must be different.
\end{minipage}
\raisebox{-2.0cm}{\includegraphics[width=0.3\textwidth]{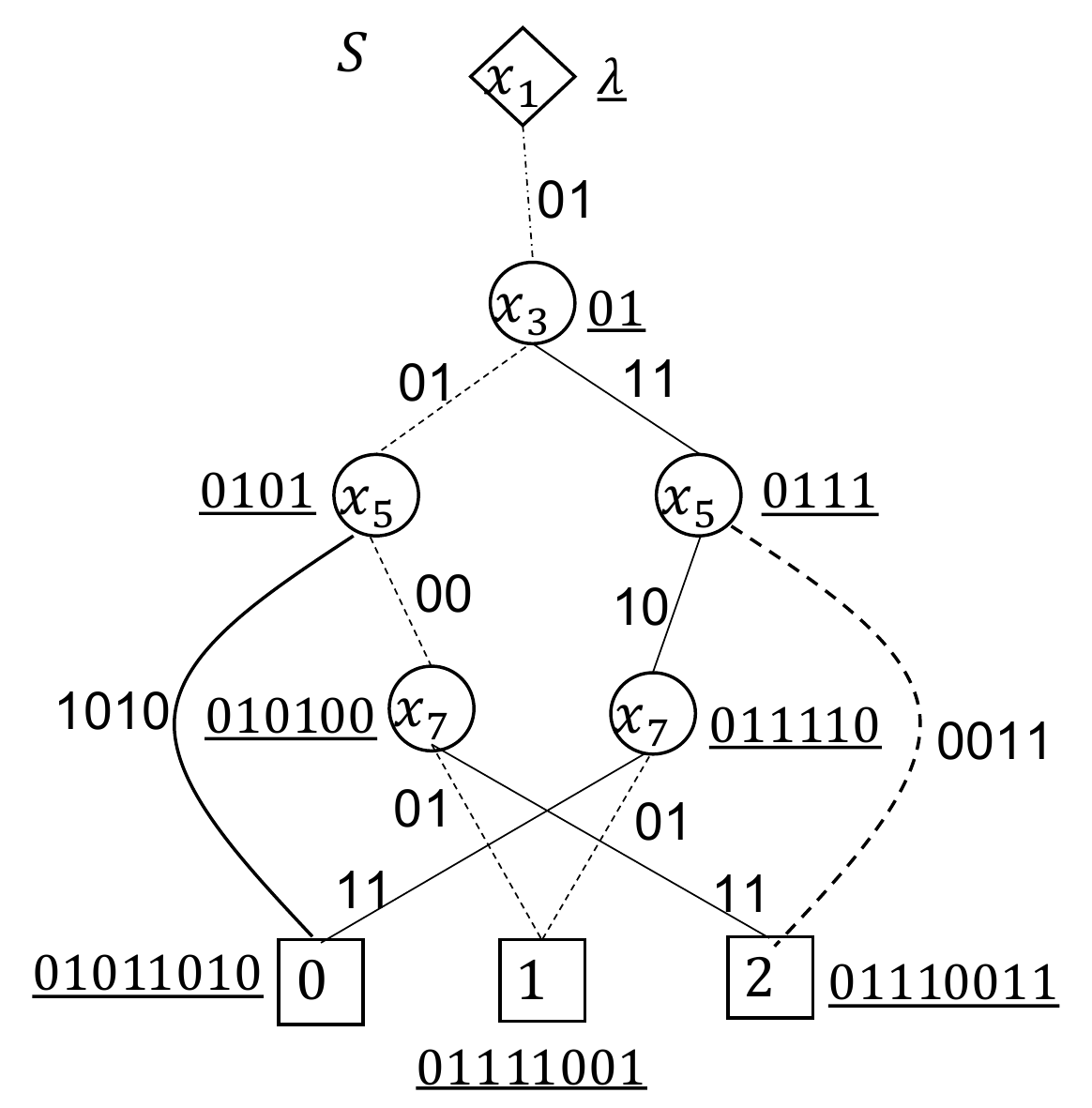}}
\item Each node has an id string which is a member of some node id set $V(S)$ of the represented OMTBDD $\mathcal{D}(S)$ if the node is not dummy, and $\lambda$ (null string) if the node is dummy.
\end{itemize}
The represented OMTBDD $\mathcal{D}(S)$ is obtained from an OMTBDDAS $S$ by removing the dummy root, its outgoing edge
and all bits except the first bit from the label strings of all edges (and throwing away the id strings possessed by all nodes).
We define $E(S)$ as the set of edges in $S$, which is represented as a subset of $V(S)\times V(S)$.
The label string of edge $(\bm{u},\bm{v})\in E(S)$ in $S$ is denoted as $l(\bm{u},\bm{v})$. 

\begin{example}
An example of an OMTBDDAS $S$ is shown in the above.
The root of $S$ is a dummy, and
the id of each node is the underlined string written
beside the node, which is a member of the node id set $V(S)$ of $\mathcal{D}(S)$ except $\lambda$, the node id of the dummy node.
Here, $\lambda$ denotes a null string.
The OMTBDD $\mathcal{D}(S)$ with node id set $V(S)$ represented by the OBDDAS $S$
is shown in the center of Figure~\ref{fig:classificationtree}.
\end{example}

As QLearn-$\pi$-OBDD does\footnote{Conditions CN, CT and CE are the same as conditions C1, C2 and C3 in \citep{N05}.}, for an unknown target OMTBDD $D$, our its extension grows hypothesis OMTBDDAS $S$ and node classification trees $T_1,T_2,\dots,T_m$
so as to keep the following conditions CN, CT and CE. Note that classification by a hypothesis node classification tree
is done by membership queries for not $\mathcal{D}(S)$ but $D$ though non-$\mu$ leaf labels are strings in $V(S)$.
Thus, for any classification tree $T_i$,
\begin{center}
P1. $\forall \bm{v}_1,\forall \bm{v}_2\in \mbox{nodes($D$) with } |\bm{v}_1|=|\bm{v}_2|=i$ \ $[ \bm{v}_1 \stackrel{D}{=} \bm{v}_2
\Rightarrow T_i(\bm{v}_1)=T_i(\bm{v}_2) ]$
\end{center}
holds.

\begin{itemize}

\item[CN.] [Node condition]\\
  (1)$V(S)\subseteq \mbox{nodes}(D)$,\\
  (2)$\forall \bm{v}\in V(S)_m [\mathcal{D}(S)(\bm{v})=D(\bm{v})]$, and \\
  (3)$\forall \bm{v}_1, \forall \bm{v}_2\in V(S) [ \bm{v}_1\neq \bm{v}_2 \Rightarrow \bm{v}_1\stackrel{D}{\neq}\bm{v}_2 ]$.

\item[CT.] [Node classification tree condition]\\
  (1)$\forall \bm{v}\in V(S) [ T_{|\bm{v}|}(\bm{v})=\bm{v} ]$, and\\
  (2)$\forall i\in \{1,...,m\},\forall \bm{a}\in \{0,1\}^i [ \bm{a} \not\in \mbox{nodes}(D) \Rightarrow T_i(\bm{a})=\mu ]$.

\item[CE.] [Edge condition] For all $(\bm{u},\bm{v})\in E(S)$,\\
  (1)$T_{|\bm{v}|}(\bm{u}\cdot l(\bm{u},\bm{v}))=\bm{v}$, and\\
  (2)$|\bm{u}|<\forall j<|\bm{v}|$, $T_j(\bm{u}\cdot\mathrm{pre}(l(\bm{u},\bm{v}),j-|\bm{u}|))=\mu$.
\end{itemize}

Condition CN is conditions for the node id set $V(S)$:
(1) any element of $V(S)$ must be an access string for some node in $D$,
(2) $\mathcal{D}(S)$ must have the same value as $D$ for all the length-$m$ strings in $V(S)$, and
(3) any distinct strings in $V(S)$ must reach distinct nodes in $D$.
Condition CT is conditions for hypothesis node classification trees $T_1,T_2,\dots,T_m$:
(1) any node id must be classified into itself, and
(2) any non-access-string must be classified into $\mu$.
Condition CE is conditions for edges in $\mathcal{D}(S)$: for any edge,
(1) the concatenated string of its from-node id and its label string  must be classified into its to-node id, and
(2) the concatenated string of its from-node id and any prefix of its label string must be classified into $\mu$.

Note that hypothesis node classification trees might be incomplete.
As a result, for the label $\bm{r}$ of some single test node in $T_i$ ($i=1,\dots,m$),
no edge labeled $D(\bm{a}\bm{r})$ might exist for some $\bm{a}\in \{0,1\}^i$.
In such case, we define $T_i(\bm{a})$ as the label of the last single test node that is reached by repeating edge selection
operations at twin-test and single-test nodes starting from the root node.

\subsection{Pseudocode}

\begin{algorithm}[tb]
  \caption{QLearn-OMTBDD()}\label{alg:qlearn}
{\small
  \begin{algorithmic}[1]
\ENSURE the reduced OMTBDD of a target function
\STATE $(\mbox{Ans},\bm{e}')\gets\mbox{EQ}(\mbox{\textbf{0}})$
\STATE \textbf{if} $\mbox{Ans}=\mbox{YES}$ \textbf{then} \textbf{return} \textbf{0}
\STATE $\ell'\gets D(\bm{e}')$, $(\mbox{Ans},e)\gets\mbox{EQ}(\boldsymbol{\ell}')$
\STATE \textbf{if} $\mbox{Ans}=\mbox{YES}$ \textbf{then} \textbf{return} $\boldsymbol{\ell'}$
\STATE $\ell\gets D(\bm{e})$, $(S,T_1,T_2,...,T_m)\gets\mbox{Initial-Hypothesis}((\bm{e}',\ell'),\bm{e})$ 
\LOOP
\IF{$S(\bm{e})=\ell$}
\STATE $(\mbox{Ans},\bm{e})\gets\mbox{EQ}(\mathcal{D}(S))$\label{alg:eq:2}
\STATE \textbf{if} $\mbox{Ans}=\mbox{YES}$ \textbf{then} \textbf{return} $\mathcal{D}(S)$ \label{alg:ans}
\STATE $\ell\gets D(\bm{e})$ \label{alg:mem:3}
\ENDIF
\STATE $(S,T_1,T_2,...,T_m)\gets\mbox{Update-Hypothesis}(S,T_1,T_2,...,T_m,\bm{e})$ 
\ENDLOOP
  \end{algorithmic}
  }
\end{algorithm}
\begin{algorithm}[tb]
  \caption{Update-Hypothesis($S,T_1,T_2,...,T_m,\bm{e}$)}\label{alg:grow}
{\small
  \begin{algorithmic}[1]
\ENSURE the updated OMTBDDAS $S$ and node classification trees $T_1,T_2,...,T_m$\\
\STATE $(\bm{p}_1,\bm{p}_2,...,\bm{p}_k)\gets$ the id sequence of nodes in $S$ passed by the counterexample $\bm{e}$ in the passing order.\label{alg:uh:1}
\STATE $i\gets$ the index $i$ s.t. $1\leq i<k$ and  $D(\bm{p}_i\mathrm{suf}(\bm{e},m-|\bm{p}_i|))\neq D(\bm{p}_{i+1}\mathrm{suf}(\bm{e},m-|\bm{p}_{i+1}|))=D(\bm{p}_k)$\label{alg:find-i}
\IF{$D(\bm{p}_il(\bm{p}_i,\bm{p}_{i+1})\mathrm{suf}(\bm{e},m-|\bm{p}_{i+1}|))\neq D(\bm{p}_k)$} 
\STATE $(S,T_1,T_2,...,T_m)\gets\mbox{NodeSplit}(S,T_1,T_2,...,T_m,\bm{e})$\label{alg:exec:nodesplit}
\ELSE
\STATE $(S,T_1,T_2,...,T_m)\gets\mbox{NewBranchingNode}(S,T_1,T_2,...,T_m,\bm{e},\bm{p}_i,\bm{p}_{i+1})$\label{alg:exec:newbranching}
\ENDIF
\STATE \textbf{return} $(S,T_1,T_2,...,T_m)$
  \end{algorithmic}
  }
\end{algorithm}
\begin{algorithm}[h]
\caption{NodeSplit($S,T_1,T_2,...,T_m,\bm{e},\bm{p}_i,\bm{p}_{i+1}$)}\label{alg:nodesplit}
\begin{algorithmic}[1]
\ENSURE the updated OMTBDDAS $S$ and node classification trees $T_1,T_2,...,T_m$\\
\STATE $\bm{e}_{i+1}\gets \mbox{suf}(\bm{e},m-|\bm{p}_{i+1}|)$, $\bm{v}\gets \bm{p}_il(\bm{p}_i,\bm{p}_{i+1})$\\
\STATE
\begin{minipage}[t]{0.59\textwidth}
Add a node $\bm{v}$ labeled $x_{|\bm{v}|+1}$ and an edge $(\bm{p}_i,\bm{v})$ labeled $l(\bm{p}_i,\bm{p}_{i+1})$ to $S$,
and remove the edge $(\bm{p}_i,\bm{p}_{i+1})$ from $S$. \label{alg:nodesplit:addnode}
\end{minipage}
\raisebox{-1.2cm}{\includegraphics[width=0.35\textwidth]{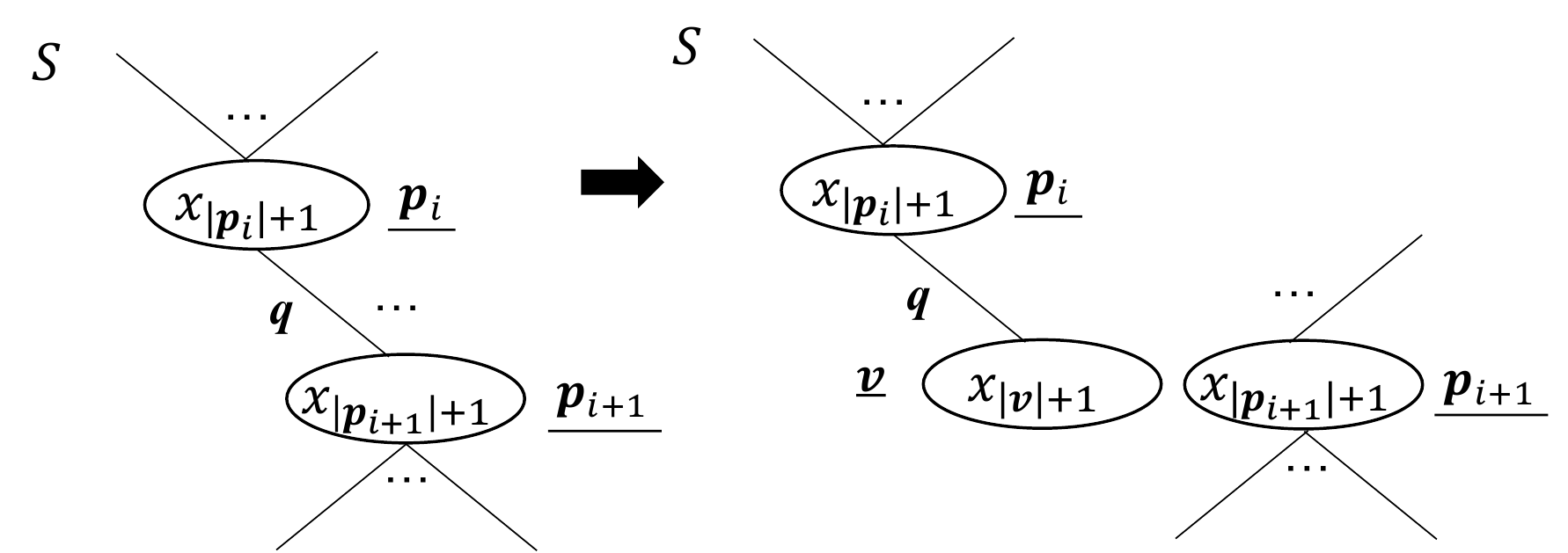}}
\STATE
\begin{minipage}[t]{0.79\textwidth}
Create a tree $T^{\bm{e}_{i+1}}_{|\bm{v}|}$ that is composed of a single-test root node labeled $\bm{e}_{i+1}$ and its two child nodes labeled $\bm{p}_{i+1}$ 
and $\bm{v}$ which are connected by edges labeled $D(\bm{p}_k)$ and $D(\bm{v}\bm{e}_{i+1})$, respectively.
\end{minipage}
\raisebox{-1cm}{\includegraphics[width=0.15\textwidth]{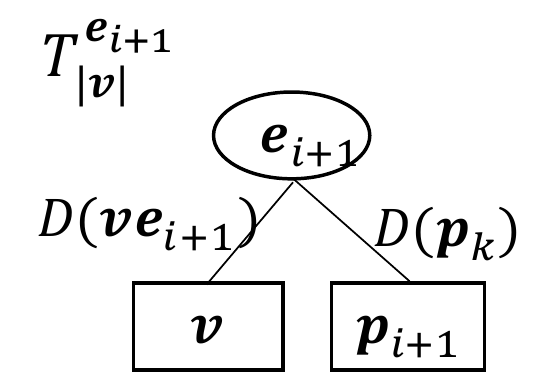}}
  \label{alg:nodesplit:Tlocal}
\STATE $\bm{t}\gets$ the label of the last twin-test node in $T_{|\bm{v}|}$ on the path from the root to the $\bm{p}_{i+1}$-labeled leaf.
\STATE Add two new edges outgoing from $\bm{v}$ by executing AddEdge($\bm{v},\bm{t}$) and AddEdge($\bm{v},\dot{\bm{t}}$),
where $\dot{\bm{t}}$ denotes the string obtained by flipping the first bit of $\bm{t}$. \label{alg:nodesplit:addedge1}
\FOR{all $\bm{v}_1\in V(S)$ s.t. $(\bm{v}_1,\bm{p}_{i+1})\in E(S)$}
\STATE Ask a membership query for $\bm{v}_1\bm{q}'\bm{e}_{i+1}$ and obtain $D(\bm{v}_1\bm{q}'\bm{e}_{i+1})$, where $\bm{q}'=l(\bm{v}_1,\bm{p}_{i+1})$. \label{alg:nodesplit:memq}
\IF{$T^{\bm{e}_{i+1}}_{|\bm{v}|}$ has an edge labeled $D(\bm{v}_1l(\bm{v}_1,\bm{p}_{i+1})\bm{e}_{i+1})$}
\IF{$D(\bm{v}_1\bm{q}'\bm{e}_{i+1})\neq D(\bm{p}_k)$}
\STATE Remove edge $(\bm{v}_1,\bm{p}_{i+1})$ and add edge $(\bm{v}_1,\bm{u})$, where $\bm{u}$ is the label of the leaf with incoming edge labeled $D(\bm{v}_1\bm{q}'\bm{e}_{i+1})$ in $T^{\bm{e}_{i+1}}_{|\bm{v}|}$.\label{alg:nodesplit:edgemove}
\ENDIF
\ELSE
\STATE Add a node $\bm{v}'=\bm{v}_1\bm{q}'$ labeled $x_{|\bm{v}'|+1}$ and an edge $(\bm{v}_1,\bm{v}')$ labeled $\bm{q}'$ to $S$,
and remove the edge $(\bm{v}_1,\bm{p}_{i+1})$ from $S$. \label{alg:nodesplit:addnode2}
\STATE Add a leaf node labeled $\bm{v}'$ to $T^{\bm{e}_{i+1}}_{|\bm{v}|}$ as a child of the root with an edge labeled  $D(\bm{v}_1\bm{q}'\bm{e}_{i+1})$. \label{alg:nodesplit:ctreeup2}
\STATE
\begin{minipage}[t]{0.55\textwidth}
  Add two new edges outgoing from $\bm{v}'$ by executing AddEdge($\bm{v}',\bm{t}$) and AddEdge($\bm{v}',\dot{\bm{t}}$). \label{alg:nodesplit:addedge2}
\end{minipage}
\ENDIF
\ENDFOR
\STATE
\vspace*{-1.5cm}
\begin{minipage}[t]{0.6\textwidth}
Replace the leaf node labeled $\bm{p}_{i+1}$ of the tree $T_{|\bm{v}|}$ with the tree $T^{\bm{e}_{i+1}}_{|\bm{v}|}$. \label{alg:nodesplit:Trep}
\end{minipage}\ \ 
\raisebox{-0.8cm}{\includegraphics[width=0.33\textwidth]{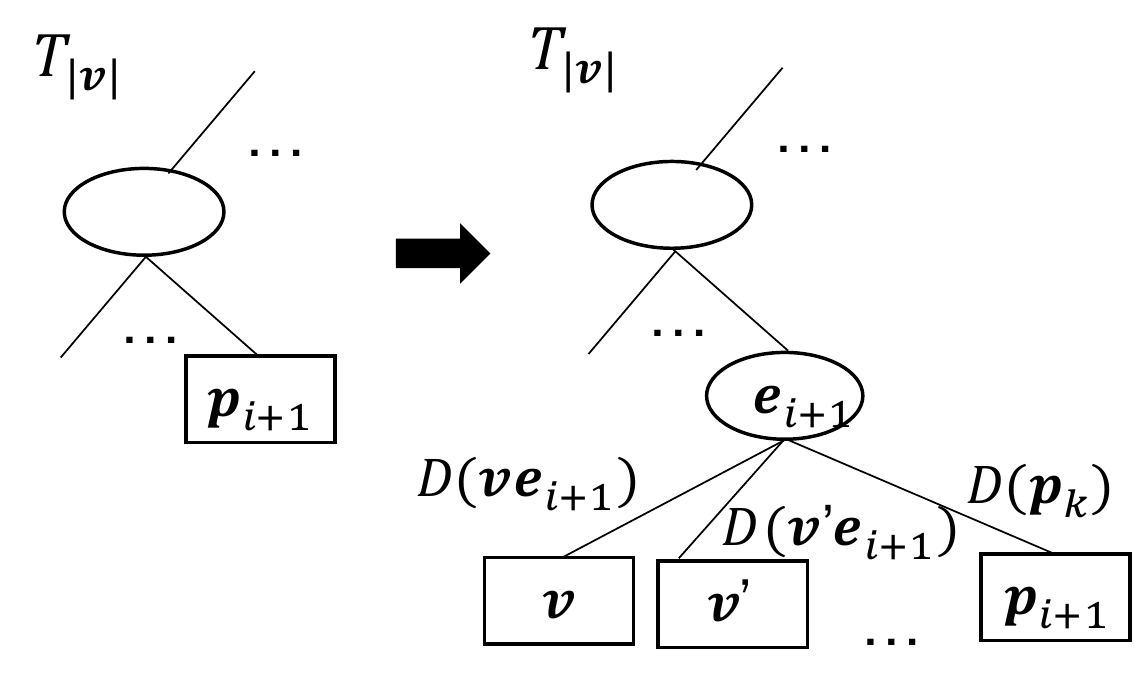}}

\end{algorithmic}
\end{algorithm}

\begin{algorithm}[h]
  \caption{NewBranchingNode($S,T_1,T_2,...,T_m,\bm{e},\bm{p}_i,\bm{p}_{i+1}$)}\label{alg:newbranch}
  \begin{algorithmic}[1]
\ENSURE the updated OMTBDDAS $S$ and node classification trees $T_1,T_2,...,T_m$
\STATE $\bm{e}_i\gets \mathrm{suf}(\bm{e},m-|\bm{p}_i|)$, $\bm{e}_{i+1}\gets \mathrm{suf}(\bm{e},m-|\bm{p}_{i+1}|)$, $\bm{q}\gets l(\bm{p}_i,\bm{p}_{i+1})$, $\bm{f}\gets \mathrm{pre}(\bm{e}_i,|\bm{q}|)$
\STATE $j\gets$ $j$ with $1\leq j\leq |\bm{q}|$ s.t. $D(\bm{p}_i\mathrm{cro}(\bm{q},\bm{f},j)\bm{e}_{i+1})\neq D(\bm{p}_i\mathrm{cro}(\bm{q},\bm{f},j-1)\bm{e}_{i+1})=D(\bm{p}_k)$\label{alg:newbranching:j}
\STATE $\bm{v}\gets \bm{p}_i\cdot\mathrm{pre}(\bm{q},|\bm{q}|-j)$, $\bm{r}\gets \mathrm{suf}(\bm{e},m-|\bm{v}|)$\label{alg:newbranching:r}
\IF{$j\neq |\bm{q}|$}
\vspace*{-0.3cm}
\STATE
\begin{minipage}[t]{0.54\textwidth}
Add a node $\bm{v}$ labeled $x_{|\bm{v}|+1}$ and edges $(\bm{p}_i,\bm{v})$ labeled $\mathrm{pre}(\bm{q},|\bm{q}|-j)$ and $(\bm{v},\bm{p}_{i+1})$ labeled $\mathrm{suf}(\bm{q},j)$ to $S$,
and remove edge $(\bm{p}_i,\bm{p}_{i+1})$ from $S$.
\end{minipage}
\raisebox{-1.5cm}{\includegraphics[width=0.37\textwidth]{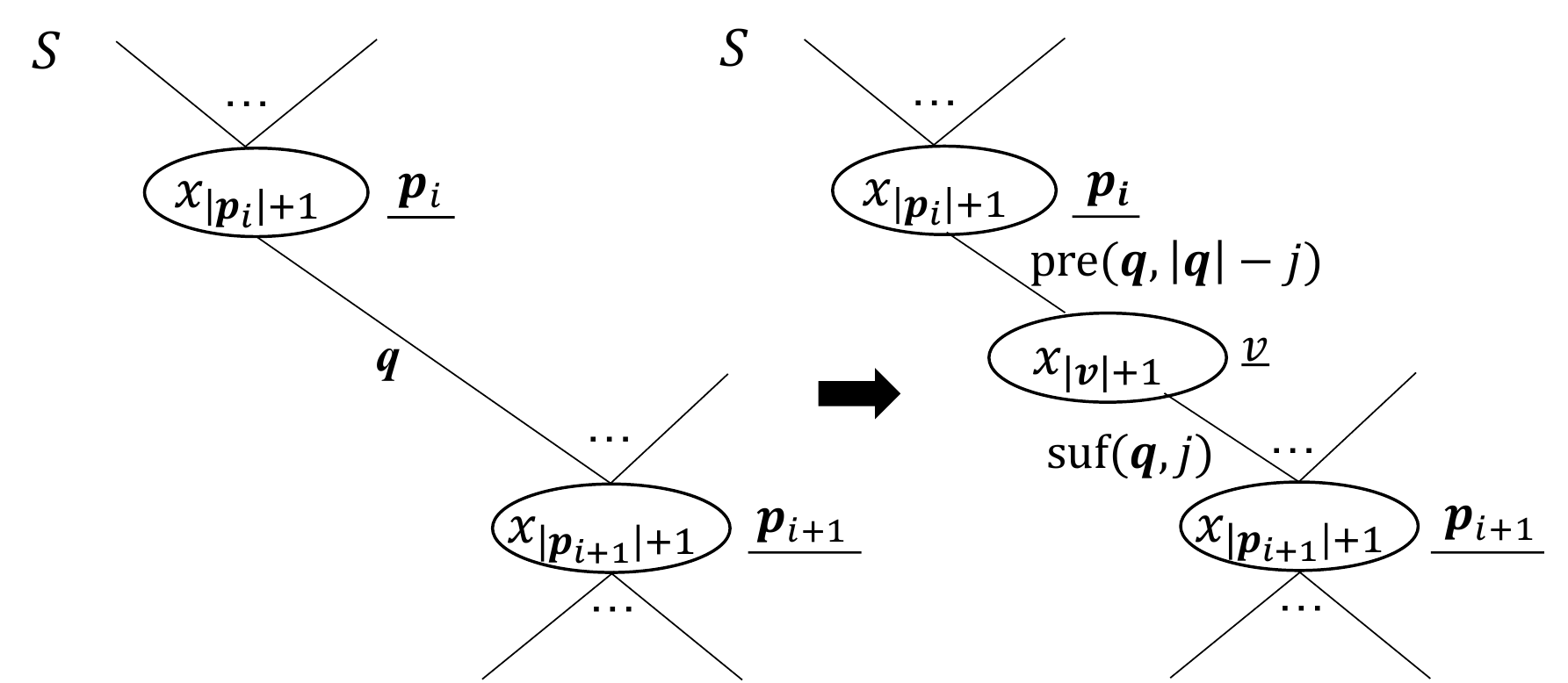}}\label{alg:newbranching:addnode}
\STATE
\begin{minipage}[t]{0.71\textwidth}
Create a tree $T^{\bm{r}}_{|\bm{v}|}$ that is composed of a twin-test node labeled $r$ and its two child nodes labeled $\mu$ and $\bm{v}$.
Attach label $(D(\bm{v}\bm{r}),D(\bm{p}_k))$ to the edge between the twin-test node and the leaf node labeled $\bm{v}$.\label{alg:newbranching:T^r}

\end{minipage}
\raisebox{-1.6cm}{\includegraphics[width=0.2\textwidth]{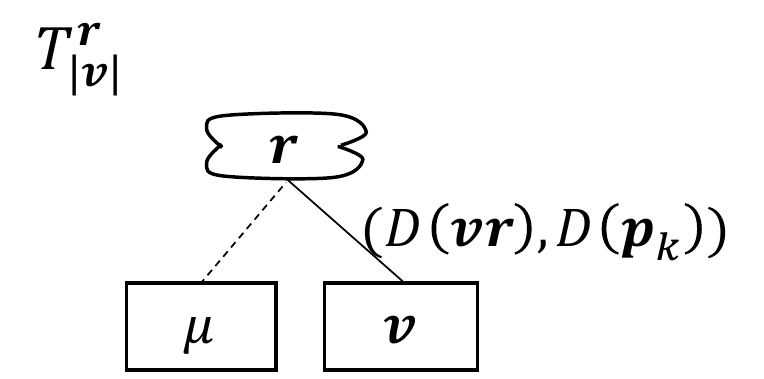}}

\FOR{all $(\bm{v}_1,\bm{v}_2)\in V(S)$ with $|\bm{v}_1|<|\bm{v}|<|\bm{v}_2|$} \label{alg:newbranching:for}
\STATE $\bm{g}\gets \mathrm{pre}(l(\bm{v}_1,\bm{v}_2),|\bm{v}|-|\bm{v}_1|)$
\IF{$T^{\bm{r}}_{|\bm{v}|}(\bm{v}_1\bm{g})=\bm{v}$} \label{alg:newbranching:memq}
\STATE
\begin{minipage}[t]{0.4\textwidth}
  Remove the edge $(\bm{v}_1,\bm{v}_2)$ and add an edge $(\bm{v}_1,\bm{v})$ labeled $\bm{g}$. \label{alg:newbranching:edgeremadd}
\end{minipage}
\ENDIF
\ENDFOR \label{alg:newbranching:endfor}
\vspace*{-2.2cm}
\STATE
\begin{minipage}[t]{0.5\textwidth}
Replace the leaf node labeled $\mu$ of the tree $T_{|\bm{v}|}$ with the tree $T^{\bm{r}}_{|\bm{v}|}$.
\end{minipage}\ \ 
\raisebox{-0.8cm}{\includegraphics[width=0.4\textwidth]{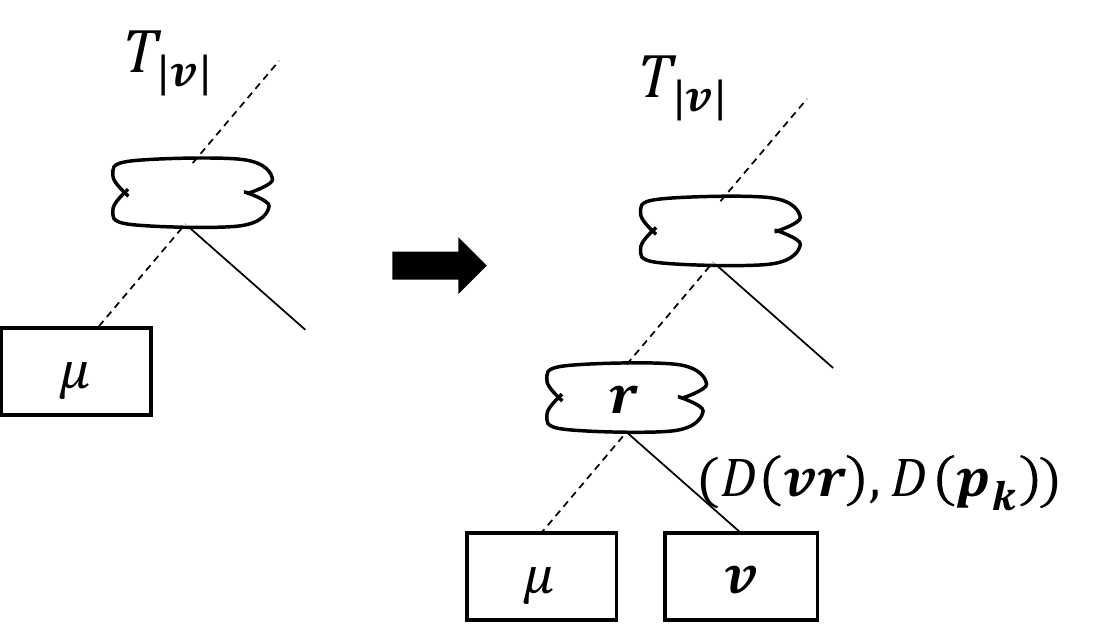}}\label{alg:newbranching:Trep}

\ELSE
\STATE Do nothing.($\bm{p}_i$ is an access string for a dummy node.)\label{alg:dummy}
\ENDIF
\STATE Add a new edge outgoing from $\bm{v}$ by executing AddEdge($\bm{v},\bm{r}$).\label{alg:newbranching:addedge}
\STATE \textbf{return} $(S,T_1,T_2,...,T_m)$
\end{algorithmic}
\end{algorithm}

\begin{algorithm}[h]
  \caption{AddEdge(\bm{v},\bm{t})}\label{alg:addedge}
  \begin{algorithmic}[1]
    \FOR{$j=1$ to $|t|$}
    \STATE $\bm{u}\gets T_{|\bm{v}|+j}(\bm{v}\mathrm{pre}(\bm{t},j))$
    \IF{$\bm{u}\neq\mu$}
    \IF{The node labeled $\bm{u}$ is a leaf node of $T_{|\bm{v}|+j}$}
    \STATE Add an edge $(\bm{v},\bm{u})$ labeled $\mathrm{pre}(\bm{t},j)$.
    \ELSE \label{alg:addedge:else}
    \STATE $\bm{u}'\gets \bm{v}\mathrm{pre}(\bm{t},j)$
    \STATE Add a leaf node labeled $\bm{u}'$ as a child of the node labeled $\bm{u}$ in $T_{|\bm{v}|+j}$ with an edge labeled $D(\bm{u}'\bm{u})$.\label{alg:addedge:ctnode}
    \IF{$j=m-|\bm{v}|$}
    \STATE Add a new sink $\bm{u}'$ labeled $D(\bm{u}')$ and an edge $(\bm{v},\bm{u}')$ labeled $\mathrm{pre}(\bm{t},j)$ to $S$.\label{alg:addedge:sink}
    \ELSE
    \STATE Add a new node $\bm{u}'$ labeled $x_{|\bm{v}|+j+1}$ and an edge $(\bm{v},\bm{u}')$ labeled $\mathrm{pre}(\bm{t},j)$ to $S$.\label{alg:addedge:internal}
    \STATE Execute AddEdge($\bm{u}',\bm{t}$) and AddEdge($\bm{u}',\dot{\bm{t}}$), where $\dot{t}$ denotes the string obtained by flipping the first bit of $t$. \label{alg:addedge:recursive}
    \ENDIF
\ENDIF
\STATE \textbf{return}
\ENDIF
\ENDFOR
  \end{algorithmic}
\end{algorithm}

Our OMTBDD-version extension of algorithm QLearn-$\pi$-OBDD is called QLearn-OMTBDD,
and its pseudocodes are shown in Algorithm~\ref{alg:qlearn}-\ref{alg:addedge}.
Main extensions are followings:
\begin{itemize}
\item It is not easy to find all the sink nodes initially as QLearn-$\pi$-OBDD does,
  so the extended algorithm finds only two sink nodes initially and add necessary sink nodes at Line~\ref{alg:addedge:sink} of AddEdge (Algorithm~\ref{alg:addedge}) later.
\item When the algorithm adds one single-test node to some node classification tree,
  it seems inefficient to find and add all its child leaves at that time as QLearn-$\pi$-OBDD does,
  so the extended algorithm adds only two child leaves to the new single-test node at Line~\ref{alg:nodesplit:Tlocal} of NodeSplit (Algorithm~\ref{alg:nodesplit}), and add other child leaves when they are tried to be accessed (Line~\ref{alg:nodesplit:ctreeup2} of NodeSplit and Line~\ref{alg:addedge:ctnode} of AddEdge).
\item Each child leaf addition to some single-test node in a classification tree is accompanied by discovery and addition of a node in $D$ (Line~\ref{alg:nodesplit:addnode2} of NodeSplit and Line~\ref{alg:addedge:internal} of AddEdge). As a result, more than one node in $D$ can be added to the hypothesis OMTBDDAS $S$ during one execution of Update-Hypothesis, which never happens in QLearn-$\pi$-OBDD.
\end{itemize}

First, QLearn-OMTBDD asks an equivalence query (EQ) for a trivial
OMTBDD, denoted by \textbf{0}, the OMTBDD being composed of only one sink labeled $0$.
If `YES' is returned to the query,
the algorithm outputs the hypothesis \textbf{0} and stops.
If the answer is `NO' and counterexample $\bm{e}'$ is returned to the equivalence query, then the algorithm asks a membership query for assignment $\bm{e}'$
and obtains $\ell'=D(\bm{e}')$, where $D$ is the target OMTBDD.
Next, the algorithm asks an equivalence query for another trivial OMTBDD $\boldsymbol{\ell}'$ which is composed of one sink labeled $\ell'$ alone.
If `YES' is returned to the query,
the algorithm outputs the hypothesis $\boldsymbol{\ell}'$ and stops.
If answer `NO' and counterexample $\bm{e}$ is returned, then the algorithm 
obtains $\ell=D(\bm{e})$ by asking a membership query.
Then, the algorithm makes an initial OMTBDDAS and node classification trees 
by algorithm Initial-Hypothesis from the two counterexamples $\bm{e}'$ and $\bm{e}$
with $D(\bm{e}')=\ell'$ and $D(\bm{e})=\ell$.

\begin{figure}[tb]
\begin{center}
\includegraphics[height=3cm]{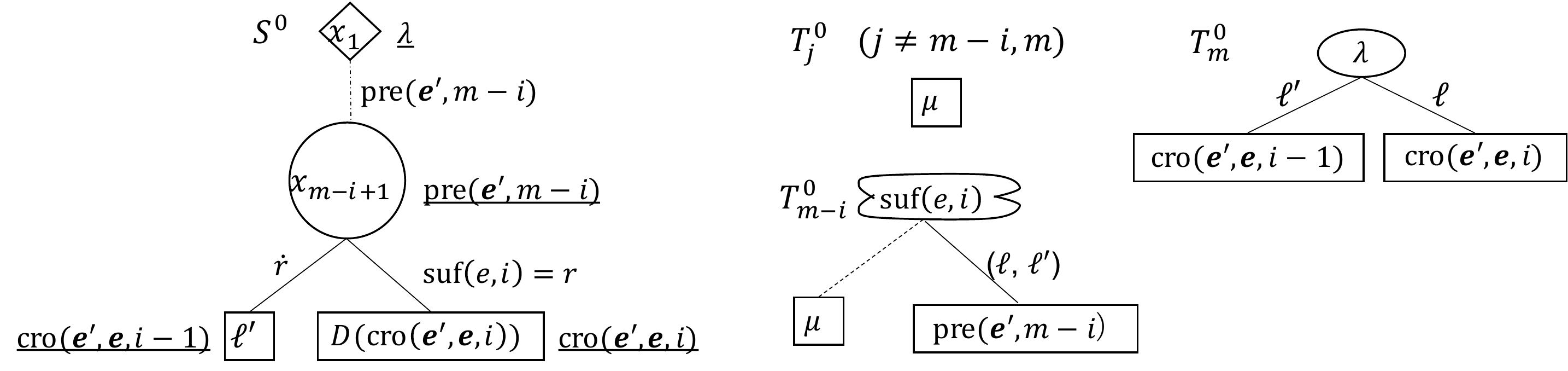}
\end{center}
\caption{Initial OMTBDDAS $S^0$ and
initial classification trees $T^0_j$
for $j=1,...,m$:
$T^0_{m-i}$ has one twin-test node,
$T^0_m$ has one single-test node, 
and the other $T^0_j$s are 
composed of only one leaf node labeled $\mu$.}\label{fig:ex:initialomddas}
\end{figure}

The algorithm Initial-Hypothesis works as follows. 
Since $\bm{e}'=\mathrm{cro}(\bm{e}',\bm{e},0)$,
$\bm{e}=\mathrm{cro}(\bm{e}',\bm{e},m)$ and $D(\bm{e}')=\ell'\neq D(\bm{e})$,
there exists $i$ with $0< i\leq m$ such that
$D(\mathrm{cro}(\bm{e}',\bm{e},i-1))=\ell'\neq D(\mathrm{cro}(\bm{e}',\bm{e},i))$,
and such an $i$ can be found by a binary search using $\lceil \log_2 m \rceil$ membership queries.
See Figure~\ref{fig:ex:initialomddas} for 
an initial OMTBDDAS $S^0$ and
initial node classification trees $T^0_j$ for $j=1,...,m$ constructed in the procedure.
Note that $S^0$ and $\{T^0_1,...,T^0_m\}$
satisfy the conditions CN, CT and CE.

Assume that the algorithm has a current OMTBDDAS $S$ and 
current node classification trees $T_i$ for $i=1,..,m$.
Let $\bm{e}$ be the last counterexample and let $\ell=D(\bm{e})$.
If $S(\bm{e})\neq \ell$, then $\bm{e}$ is still a counterexample for current hypothesis OMTBDDAS $S$.
Otherwise, the algorithm asks an equivalence query for $\mathcal{D}(S)$ and outputs it if `YES' is returned.
When `NO' is returned, a new counterexample $\bm{e}$ can be obtained,
and then a membership query for $\bm{e}$ is asked to obtain $\ell=D(\bm{e})$.
Using the counterexample $\bm{e}$, it executes the algorithm Update-Hypothesis shown in Algorithm~\ref{alg:grow}.
This process is repeated until `YES' is returned to the equivalence query.

Each execution of the procedure Update-Hypothesis finds at least one node of the target-reduced OMTBDD
and updates the current hypothesis.
Consider the path in $S$ made by a given counterexample $\bm{e}$.
Assume that there are $k$ nodes on the path and
let $\bm{p}_i$ be the id string of the $i$th node on the path from the root.
Since $\bm{e}$ is a counterexample for $S$,
the leaf node $\bm{p}_k$ reached by the path is not correct, that is, 
$D(\bm{p}_k)\neq D(\bm{e})$.
Let $\bm{e}_i=\mathrm{suf}(\bm{e},m-|\bm{p}_i|)$.
Since $\bm{p}_k = \bm{p}_k\bm{e}_k$ and $\bm{e}=\bm{p}_1\bm{e}_1$,
there must exist $i$ such that $1\leq i<k$ and $D(\bm{p}_i\bm{e}_i)
\neq D(\bm{p}_{i+1}\bm{e}_{i+1})=D(\bm{p}_k)$.
Such $i$ is calculated at Line~\ref{alg:find-i} by using membership queries.
For this $i$, let $\bm{q}$ denote the label of the edge $(\bm{p}_i,\bm{p}_{i+1})$, that is, $\bm{q}=l(\bm{p}_i,\bm{p}_{i+1})$.
There are two cases depending on the value of $D(\bm{p}_i\bm{q}\bm{e}_{i+1})$.
When $D(\bm{p}_i\bm{q}\bm{e}_{i+1})\neq D(\bm{p}_{i+1}\bm{e}_{i+1})$,
$\bm{p}_i\bm{q}$ and $\bm{p}_{i+1}$ must reach different nodes in $D$,
and this case is dealt with in the algorithm NodeSplit (Algorithm~\ref{alg:nodesplit}), where at least one single-test node is added to one of the classification trees.
When $D(\bm{p}_i\bm{q}\bm{e}_{i+1})=D(\bm{p}_{i+1}\bm{e}_{i+1})=D(\bm{p}_k)$,
there must exist a node between $\bm{p}_i$ and $\bm{p}_{i+1}$
from which the path for $\bm{p}_i\bm{e}_i$ and the path for $\bm{p}_i\bm{q}\bm{e}_{i+1}$
branches, and this case is dealt with in the algorithm NewBranchingNode (Algorithm~\ref{alg:newbranch}), where at least one twin-test node is added to one of the classification trees.
Both algorithms add a new node $\bm{v}$ to the current OMTBDDAS,
update $T_{|\bm{v}|}$,
change all edges that must enter $\bm{v}$ and add edges going out from $\bm{v}$.

\subsection{Example of Algorithm Execution}

\begin{figure}[htb]
\begin{center}
  \includegraphics*[height=0.885\textheight]{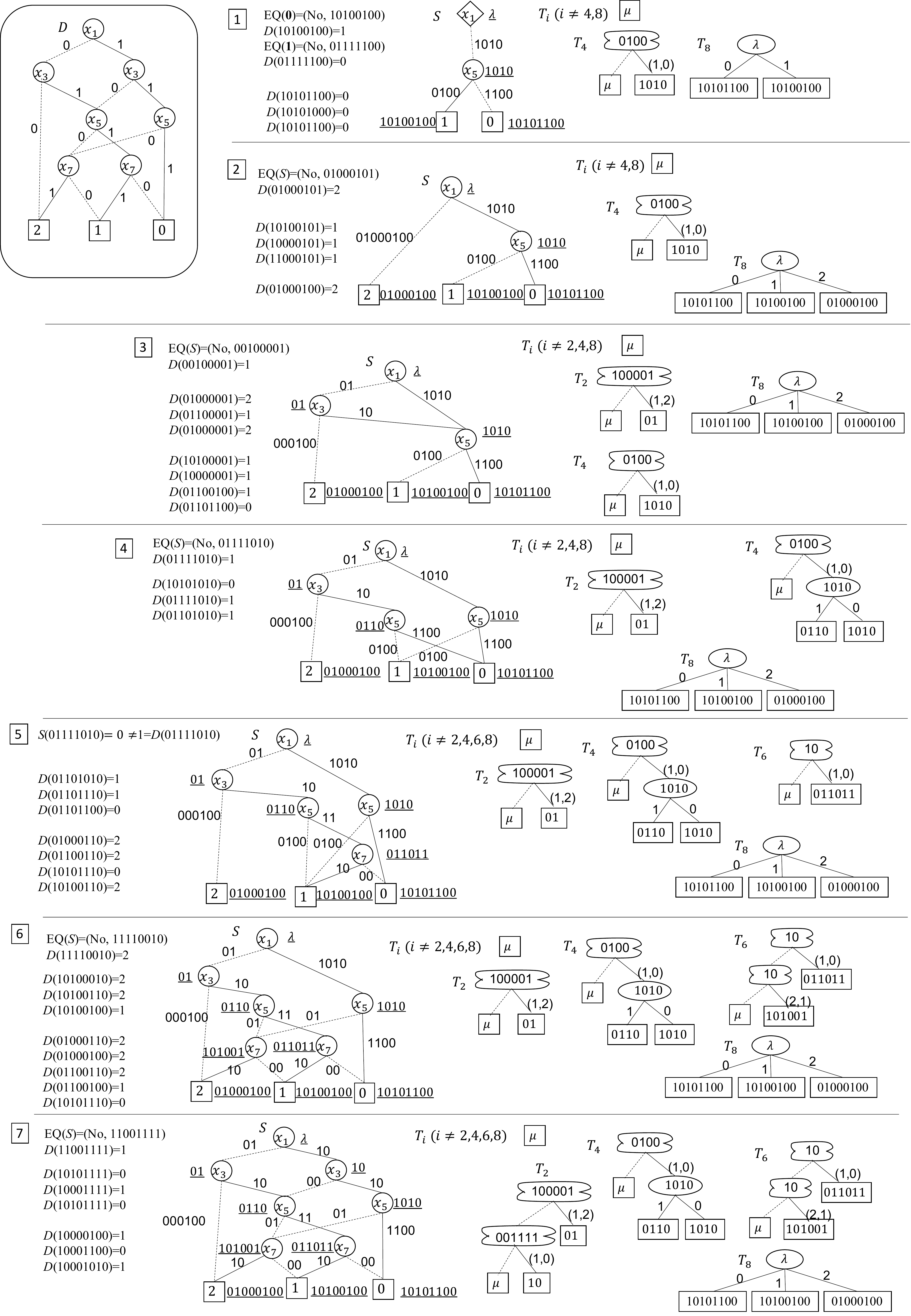}
\end{center}
\caption{An example of an OMTBDDAS constriction by algorithm QLearn-OMTBDD}\label{fig:ex:twocase}
\end{figure}

An example of QLearn-OMTBDD execution is shown in Figure~\ref{fig:ex:twocase}.
The OMTBDDAS and node classification trees $(S,T_1,\dots,T_8)$ constructed by Initial-Hypothesis($(10100100,1),01111100$)) is shown in \fbox{1}.
Since $S(01111100)=0$, an equivalence query is asked for $S$ at Line~\ref{alg:eq:2} in QLearn-OMTBDD
and a counterexample $01000101$ with $D(01000101)=2$ is assumed to be obtained.
As a sequence of access strings passed by $01000101$, $(\bm{p}_1,\bm{p}_2,\bm{p}_3)=(\lambda,1010,10100100)$ is obtained at Line~\ref{alg:uh:1} in Update-Hypothesis.
At Line~\ref{alg:find-i} in Update-Hypothesis, $i$ is set to $1$ because $2=D(01000101)=D(\bm{p}_1\mbox{suf}(01000101,8)\neq D(\bm{p}_2\mbox{suf}(01000101,4)=D(10100101)=1$
holds.
Since $1=D(10100101)=D(\bm{p}_1l(\bm{p}_1,\bm{p}_2)\mbox{suf}(01000101,4))=D(\bm{p}_3)$ holds, algorithm NewBranchingNode is executed at Line~\ref{alg:exec:newbranching}.
At Line~\ref{alg:newbranching:j} of NewBranchingNode, $j$ is set to $4=|\bm{q}|$ because $2=D(01000101)=D(\lambda\mbox{cro}(1010,0100,4)0101)\neq D(\lambda\mbox{cro}(1010,0100,3)0101)=D(11000101)=1$. Thus, Line~\ref{alg:dummy} is executed, and the dummy root becomes a non-dummy root.
Since $\bm{r}$ is set to $01000101$ at Line~\ref{alg:newbranching:r},
an edge outgoing from the root $\lambda$ is added by executing algorithm AddEdge($\lambda,01000101$) (Algorithm~\ref{alg:addedge}). (See \fbox{2}.)
In algorithm AddEdge, $T_{|\lambda|+j}(\lambda\mbox{pre}(01000101,j))=T_j(\mbox{pre}(01000101,j))=\mu$ for all $j=1,\dots,7$ and $T_8(01000101)=\lambda$ because
no edge outgoing from the root of $T_8$ is labeled $2$.
Then, a leaf node labeled $01000101$ is added as a child of node labeled $\lambda$ in $T_8$ (Line~\ref{alg:addedge:ctnode} in AddEdge),
  and a sink labeled $2$ and an edge $(\lambda,01000101)$ labeled $01000101$ is added to $S$ (Line~\ref{alg:addedge:sink} in AddEdge).

  Algorithm NodeSplit is executed in \fbox{4}. For $S$ shown in \fbox{3}, a counterexample $01111010$ is assumed to be obtained.
The sequence of access strings passed by $01111010$ is $(\bm{p}_1,\bm{p}_2,\bm{p}_3,\bm{p}_4)=(\lambda,01,1010,10101100)$ and
$i$ is set to $2$ because $1=D(01111010)=D(\bm{p}_2\mbox{suf}(01111010,6)\neq D(\bm{p}_3\mbox{suf}(01111010,4)=D(10101010)=0$ at Line~\ref{alg:find-i} in Update-Hypothesis.
In this case, $1=D(01101010)=D(\bm{p}_2l(\bm{p}_2,\bm{p}_3)\mbox{suf}(01111010,4))\neq D(\bm{p}_4)=0$ holds, so algorithm NodeSplit is executed at Line~\ref{alg:exec:nodesplit}.
In algorithm NodeSplit, node $0110$ is split from node $1010$ (Line~\ref{alg:nodesplit:addnode}), the leaf node labeled $1010$ in $T_4$ is replaced (Line~\ref{alg:nodesplit:Trep}) with a single-test node labeled $1010$ that
has a child node labeled $0110$ connected by an edge labeled $1$ and a child node labeled $1010$ connected by an edge labeled $0$ (Line~\ref{alg:nodesplit:Tlocal}).
After this node split, the last counterexample $01111010$ is still counterexample for the updated $S$ shown in \fbox{4}, that is, $0=S(01111010)\neq \ell =D(01111010)=1$, so Lines~\ref{alg:eq:2}-\ref{alg:mem:3} in QLearn-OMTBDD are not executed and Update-Hypothesis is executed for the same $(e,\ell)$.
In the update from \fbox{5} to \fbox{6}, node $101001$ labeled $x_7$ is added to $S$ by algorithm NewBranchingNode, in which the edge $(1010,10100100)$ is replaced
with the edge $(1010,101001)$. This is done during the execution of for-loop of Lines~\ref{alg:newbranching:for}-\ref{alg:newbranching:endfor}.

Since the OMTBDD corresponding to the OMTBDDAS $S$ shown in \fbox{7} is equivalent to $D$, the answer of the equivalence query for $S$ is 'Yes' at Line~\ref{alg:ans}
in QLearn-OMTBDD and $\mathcal{D}(S)(=D)$ is outputted.

\subsection{Correctness and Efficiency}\label{sec:correctness}

\begin{theorem}\label{th}
For an arbitrary target reduced OMTBDD $D$,
algorithm QLearn-OMTBDD exactly learns $D$ using 
at most $n$ equivalence queries
and at most $2n(\lceil \log_2 m\rceil +3n)$ membership queries,
where $m(\geq 1)$ is the number of variables and $n$ is the number of nodes in $D$.
\end{theorem}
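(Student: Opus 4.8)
The plan is to establish three things in sequence: (i) the invariants CN, CT, CE (together with the derived property P1) are preserved by every call to Update-Hypothesis, so that the hypothesis $S$ always represents a subgraph of the target reduced OMTBDD $D$; (ii) each call to Update-Hypothesis discovers at least one node of $D$ that was not previously in $V(S)$, so the outer loop of QLearn-OMTBDD terminates after at most $n$ iterations and at most $n$ equivalence queries; and (iii) the membership-query count over the whole run is at most $2n(\lceil\log_2 m\rceil+3n)$. I would carry these out in that order, since (iii) depends on the structural facts proved in (i)–(ii).

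For (i), I would go through the two update routines NodeSplit and NewBranchingNode case by case. The key observations to isolate are: the index $i$ found at Line~\ref{alg:find-i} exists because $D(\bm p_1\bm e_1)=D(\bm e)\neq D(\bm p_k)=D(\bm p_k\bm e_k)$, and the telescoping forces a flip somewhere; in the NodeSplit branch, $D(\bm p_i\bm q\bm e_{i+1})\neq D(\bm p_{i+1}\bm e_{i+1})$ certifies via P1 that $\bm v=\bm p_i\bm q$ and $\bm p_{i+1}$ reach distinct nodes of $D$, so adding $\bm v$ to $V(S)$ respects CN(3), and the suffix $\bm e_{i+1}$ is a valid distinguishing string making $T^{\bm e_{i+1}}_{|\bm v|}$ a legitimate single-test fragment; in the NewBranchingNode branch, the inner binary search of Line~\ref{alg:newbranching:j} locates the exact bit position of the branching node $\bm v$ along the edge label $\bm q$, and $\bm r=\mathrm{suf}(\bm e,m-|\bm v|)$ witnesses that $\bm v$ is genuinely a branching node (the two outgoing directions disagree on $\bm r$), so $T^{\bm r}_{|\bm v|}$ is a legitimate twin-test fragment. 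I would then check that the edge-rerouting for-loops and the AddEdge recursion re-establish CE for every affected edge, using that AddEdge follows exactly the classification-tree path and only creates a new OMTBDDAS node when the tree path ends at a genuine new leaf (i.e., a new node of $D$, by the reducedness-based argument of Section~3). The subtlety unique to the OMTBDD setting — that an answer to a membership query may have no matching edge label and forces a brand-new leaf/node — is precisely what Lines~\ref{alg:nodesplit:addnode2}, \ref{alg:nodesplit:ctreeup2}, \ref{alg:addedge:ctnode}, \ref{alg:addedge:internal} handle, and I must verify these also preserve the invariants.

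For (ii), the progress argument: in both NodeSplit and NewBranchingNode a node $\bm v$ is added with $T_{|\bm v|}(\bm v)=\bm v$ that was previously classified to a different label ($\bm p_{i+1}$ or $\mu$), so $\bm v\notin[\bm v']$ for any old $\bm v'\in V(S)$ by CN(3)/CT; hence $|V(S)|$ strictly increases, and since $V(S)\subseteq\mathrm{nodes}(D)$ holds with distinct ids reaching distinct nodes, $|V(S)|\le n$ throughout, bounding the loop by $n$. Then when $\mathcal D(S)$ is not yet $D$ it must differ from $D$ somewhere, so each equivalence query either returns YES or yields a counterexample feeding exactly one more productive update; this gives the $n$-equivalence-query bound, counting also the two initial trivial queries on $\mathbf 0$ and $\boldsymbol{\ell}'$ against the first node discoveries (Initial-Hypothesis already puts two nodes into $V(S^0)$).

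For (iii), I would tally membership queries by source: Initial-Hypothesis uses $\lceil\log_2 m\rceil$ queries plus $O(1)$; per Update-Hypothesis call, Line~\ref{alg:find-i} and Line~\ref{alg:newbranching:j} each use a binary search costing $\lceil\log_2 m\rceil$; every classification-tree evaluation $T_j(\cdot)$ costs at most (number of twin-test nodes)$\times 2$ + (number of single-test nodes) $\le 2\cdot(\text{current }|V_j|)$ queries along the path, i.e. $O(n)$; the edge-rerouting loops and AddEdge perform $O(n)$ such evaluations per node discovered, and since at most $n$ nodes are ever discovered in total, the grand total of the classification/AddEdge part is $O(n^2)$. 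Assembling constants carefully so that the classification work is bounded by $2n\cdot 3n$ and the binary-search work by $2n\lceil\log_2 m\rceil$ gives the stated $2n(\lceil\log_2 m\rceil+3n)$. The main obstacle is the bookkeeping in this last step: I must show the per-update work, amortized against the node discoveries, never exceeds the claimed bound despite the fact that a single Update-Hypothesis call can now add several nodes (via nested AddEdge recursion) — so the accounting has to be charged to the discovered nodes globally rather than per-iteration, exactly paralleling the OBDD analysis of \citep{N05} but with the branching factor $K$ handled by the observation that each AddEdge step still asks at most one membership query per tree level and creates at most one new node per level.
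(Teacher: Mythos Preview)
Your plan matches the paper's proof almost exactly: the paper proves the invariant-preservation and progress claim as its Lemma~2, does the membership-query accounting per node discovery just as you sketch (charging the nested AddEdge work to the newly discovered nodes), and arrives at the same bounds.

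Two small corrections. First, the initial phase issues \emph{three} equivalence queries (for $\mathbf{0}$, for $\boldsymbol{\ell}'$, and once more after Initial-Hypothesis since $S^0(\bm e)=\ell$ by construction), and $S^0$ already contains \emph{three} non-dummy nodes (one internal branching node and two sinks); the paper's count $3+(n-3)=n$ uses this. Second, and more substantively, your phrase ``$S$ always represents a subgraph of $D$'' is not what CN/CT/CE give you: the nodes of $S$ inject into nodes of $D$, but the edges of the OMTBDDAS correspond to \emph{paths} in $D$ (they may skip levels), not edges. Consequently your step~(ii) is missing the piece the paper isolates as its Lemma~1: from CN, CT, CE together with $|V(S)|=n$ one must still argue that the edges of $\mathcal D(S)$ and $D$ coincide, which uses CE(1), CE(2), CT and P1 to rule out any length discrepancy between an $S$-edge and the corresponding $D$-edge. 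You can also close the gap by contradiction (a further NO answer would force $|V(S)|>n$, violating CN), but either way the point needs to be stated. Your membership-query accounting is on the right track; just be careful that the $O(n)$ bound for AddEdge comes from summing tree depths \emph{across all levels} (total internal nodes over all $T_j$ is at most $n$), not from bounding a single $T_j$-evaluation by $n$.
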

\begin{proof}
See \ref{thproof}.
\end{proof}

Assuming that all the operations on strings of length $m$ need at most $O(m)$ steps,
it can be easily shown that
the running time is at most $O(nm(\log m+n))$, that is, a factor of $O(m)$ larger than the number of queries.

\section{Experiments}

We conducted experiments to show effectiveness of our algorithm using a synthetic dataset and several benchmark datasets in UCI 
Machine Learning Repository.

\subsection{Synthetic Dataset}

We investigated the empirical sample complexity of our algorithm using a synthetic dataset.
We randomly generated OMTBDDs with the number of nodes $n$, the number of variables $m$, and the number of leaves $K$
for various $(n,m,K)$.
Ten OMTBDDs were generated for each $(n,m,K)$
(1) with  $n=1, 2, 4, 8, 16, 32, 64, 128, 256, 512 (\times 10^2)$, and fixed $m=3200$ and $K=32$,
(2) with  $m=1, 2, 4, 8, 16, 32, 64, 128, 256, 512 (\times 10^2)$, and fixed $n=3200$ and $K=32$, and
(3) with $K=2, 4, 8, 16, 32, 64, 128, 256, 512$, and fixed $n,m=3200$,
using a procedure similar to the OBDD generation procedure \cite{N05} (See~\ref{sec:OMTBDDgen}).
The number of queries for OMTBDDs of parameter list $(n,m,K)$ are averaged over the ten OMTBDDs. 
\begin{figure}[t]
  {\tiny
    \begin{tabular}{cc}
    {\normalsize (1)}\hspace*{\fill} \begin{tabular}[t]{|@{\ }l@{\ }|@{\ }c@{\ }c@{\ }c@{\ }c@{\ }c@{\ }|}
      \hline
      \#node$/ 10^2$ & 1 & 2 & 4 & 8 & 16 \\
      \hline
      \#query$/ 10^4$ & 0.281 & 1.41 & 5.90& 23.2 & 92.6\\
      \hline
     \multicolumn{1}{l|}{} & 32 & 64 & 128 & 256& 512\\
      \cline{2-6}
      \multicolumn{1}{l|}{}& \multicolumn{1}{|@{\ }c}{357} & 1390 & 5330 & 20000 & 72600\\
      \cline{2-6}
    \end{tabular} &
    \hspace*{\fill} \begin{tabular}[t]{|@{\ }l@{\ }|@{\ }c@{\ }c@{\ }c@{\ }c@{\ }c@{\ }|}
      \hline
      \#node$/ 10^2$ & 1 & 2 & 4 & 8 & 16 \\
      \hline
      \#query & 53.2 & 131 & 281 & 573 & 1150\\
      \hline
     \multicolumn{1}{l|}{} & 32 & 64 & 128 & 256& 512\\
      \cline{2-6}
      \multicolumn{1}{l|}{}&  \multicolumn{1}{|@{\ }c}{2260} & 4400 & 8480 & 16000 & 29400 \\
      \cline{2-6}
    \end{tabular}\\
  \includegraphics[width=0.475\textwidth]{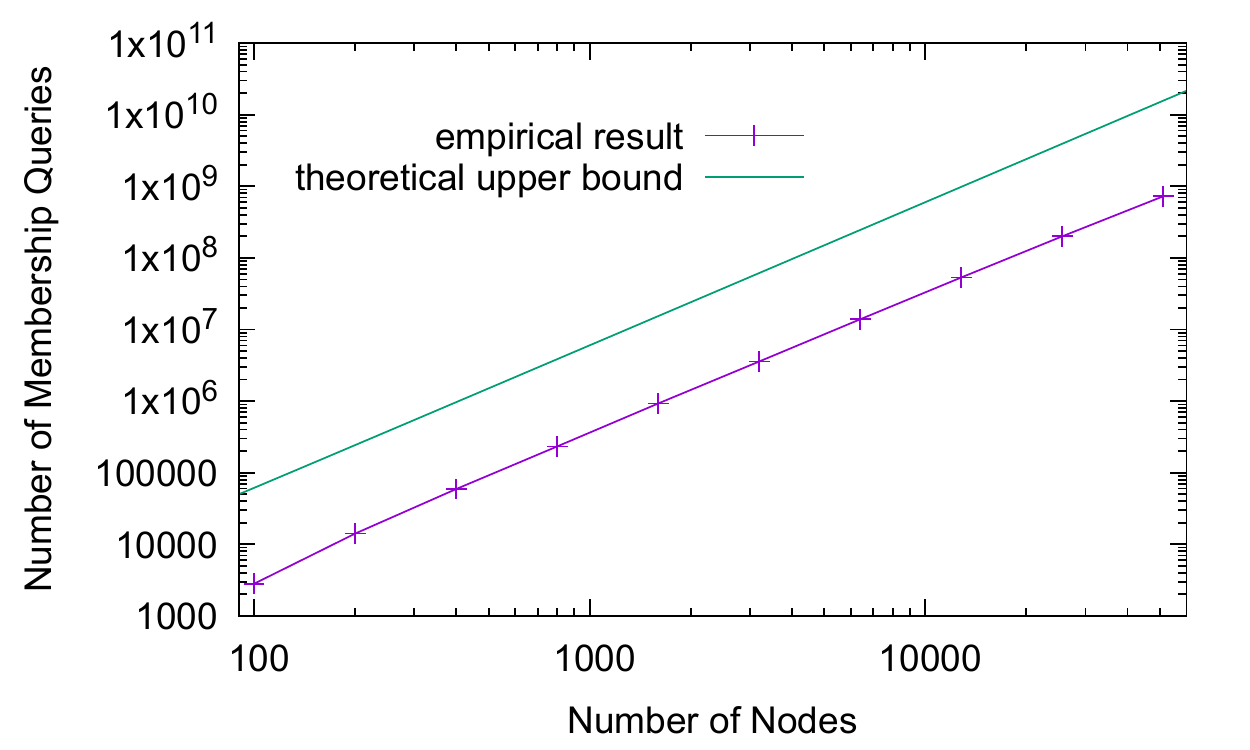}&
  \includegraphics[width=0.475\textwidth]{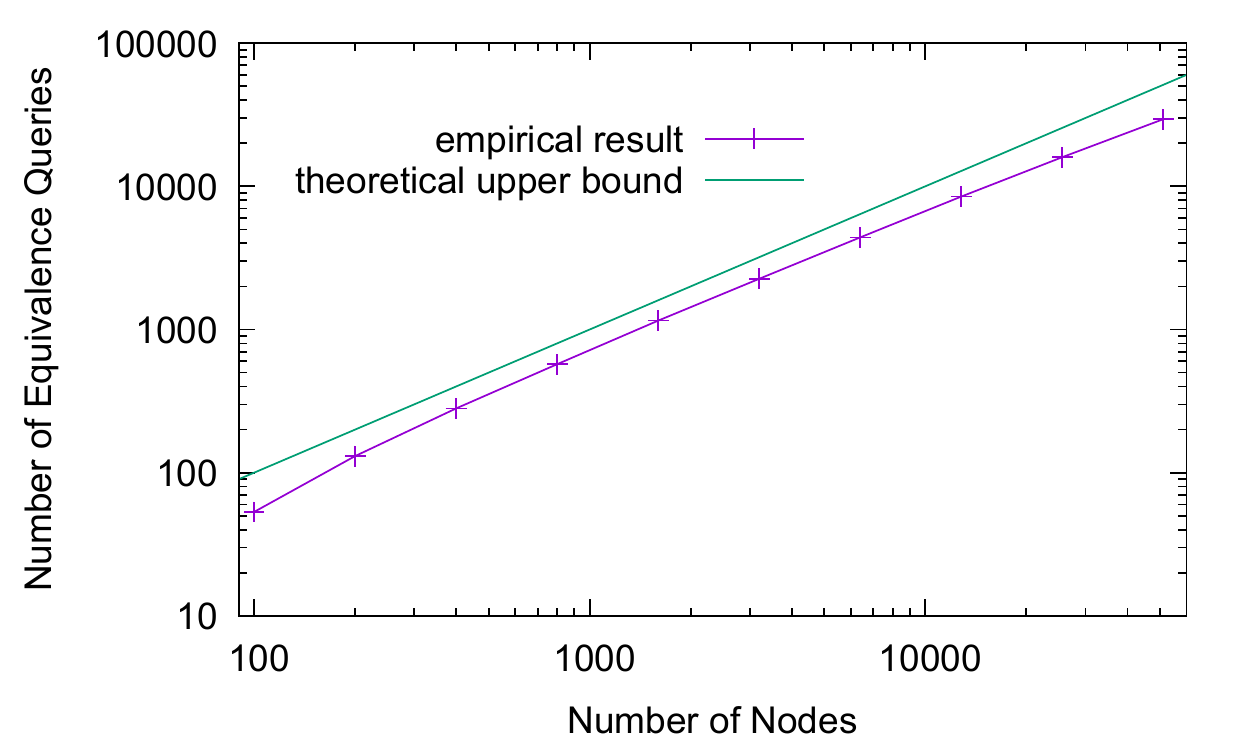}\\
    {\normalsize (2)}\hspace*{\fill} \begin{tabular}[t]{|@{\ }l@{\ }|@{\ }c@{\ }c@{\ }c@{\ }c@{\ }c@{\ }|}
      \hline
      \#variable$/ 10^2$ & 1 & 2 & 4 & 8 & 16 \\
      \hline
      \#query$/ 10^4$ & 357 & 365 & 373 & 377 & 378\\
      \hline
     \multicolumn{1}{l|}{} & 32 & 64 & 128 & 256& 512\\
      \cline{2-6}
      \multicolumn{1}{l|}{}& \multicolumn{1}{|@{\ }c}{378} & 382 & 383 & 377 & 386\\
      \cline{2-6}
    \end{tabular} &
    \hspace*{\fill}\begin{tabular}[t]{|@{\ }l@{\ }|@{\ }c@{\ }c@{\ }c@{\ }c@{\ }c@{\ }|}
      \hline
      \#variable$/ 10^2$ & 1 & 2 & 4 & 8 & 16 \\
      \hline
      \#query$/ 10$ & 226 & 234 & 238 & 241 & 245 \\
      \hline
       \multicolumn{1}{l|}{}& 32 & 64 & 128 & 256& 512\\
      \cline{2-6}
      \multicolumn{1}{l|}{}& \multicolumn{1}{|@{\ }c}{245} & 246 & 247 & 243 & 246\\
      \cline{2-6}
    \end{tabular}\\
  \includegraphics[width=0.475\textwidth]{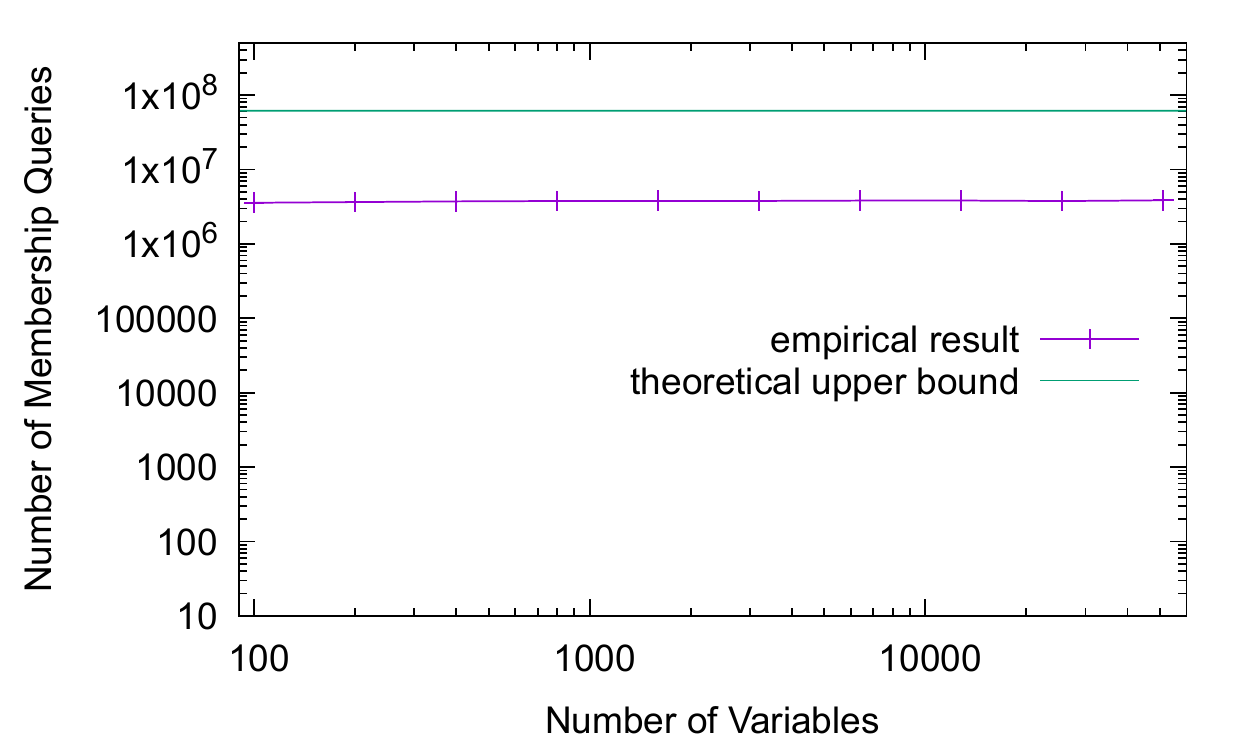}&
  \includegraphics[width=0.475\textwidth]{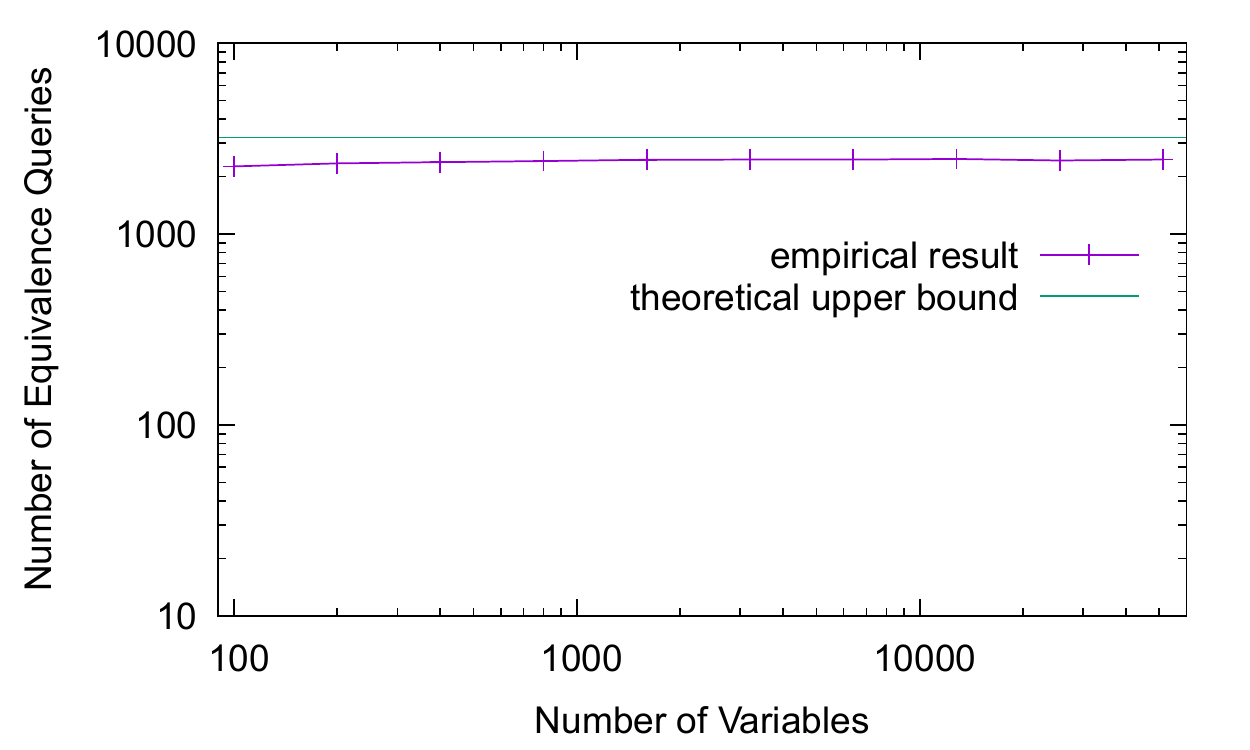}\\
    {\normalsize (3)}\hspace*{\fill} \begin{tabular}[t]{|@{\ }l@{\ }|@{\ }c@{\ }c@{\ }c@{\ }c@{\ }c@{\ }|}
      \hline
      \#leaf & 2 & 4 & 8 & 16 & 32 \\
      \hline
      \#query$/ 10^4$ & 140 & 237 & 311 & 345 & 357\\
      \hline
     \multicolumn{2}{l|}{} & 64 & 128 & 256& 512\\
      \cline{3-6}
      \multicolumn{2}{l|}{}& \multicolumn{1}{|@{\ }c}{354} & 347 & 337 & 299\\
      \cline{3-6}
    \end{tabular} &
    \hspace*{\fill}\begin{tabular}[t]{|@{\ }l@{\ }|@{\ }c@{\ }c@{\ }c@{\ }c@{\ }c@{\ }|}
      \hline
      \#leaf & 2 & 4 & 8 & 16 & 32\\
      \hline
      \#query$/ 10$ & 245 & 198 & 205 & 219 & 226\\
      \hline
       \multicolumn{2}{l|}{}& 64 & 128 & 256& 512\\
      \cline{3-6}
      \multicolumn{2}{l|}{}& \multicolumn{1}{|@{\ }c}{228} & 223 & 219 & 203\\
      \cline{3-6}
    \end{tabular}\\
  \includegraphics[width=0.475\textwidth]{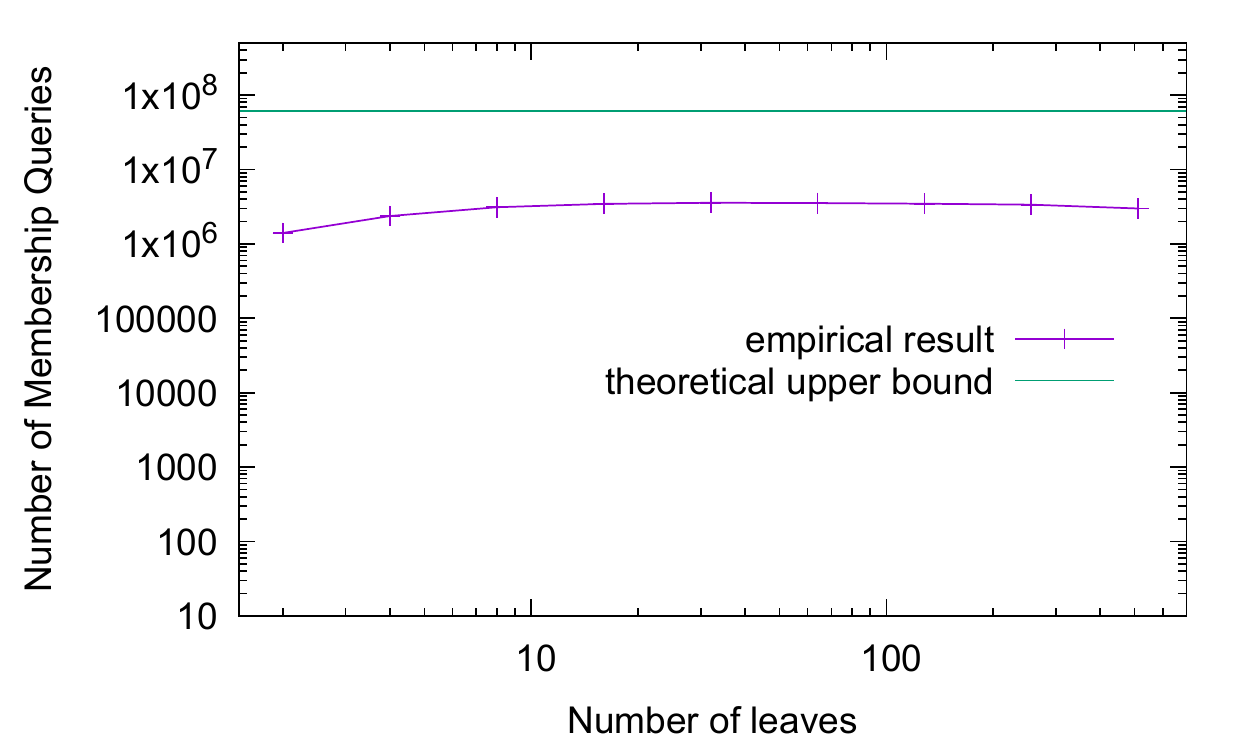}&
  \includegraphics[width=0.475\textwidth]{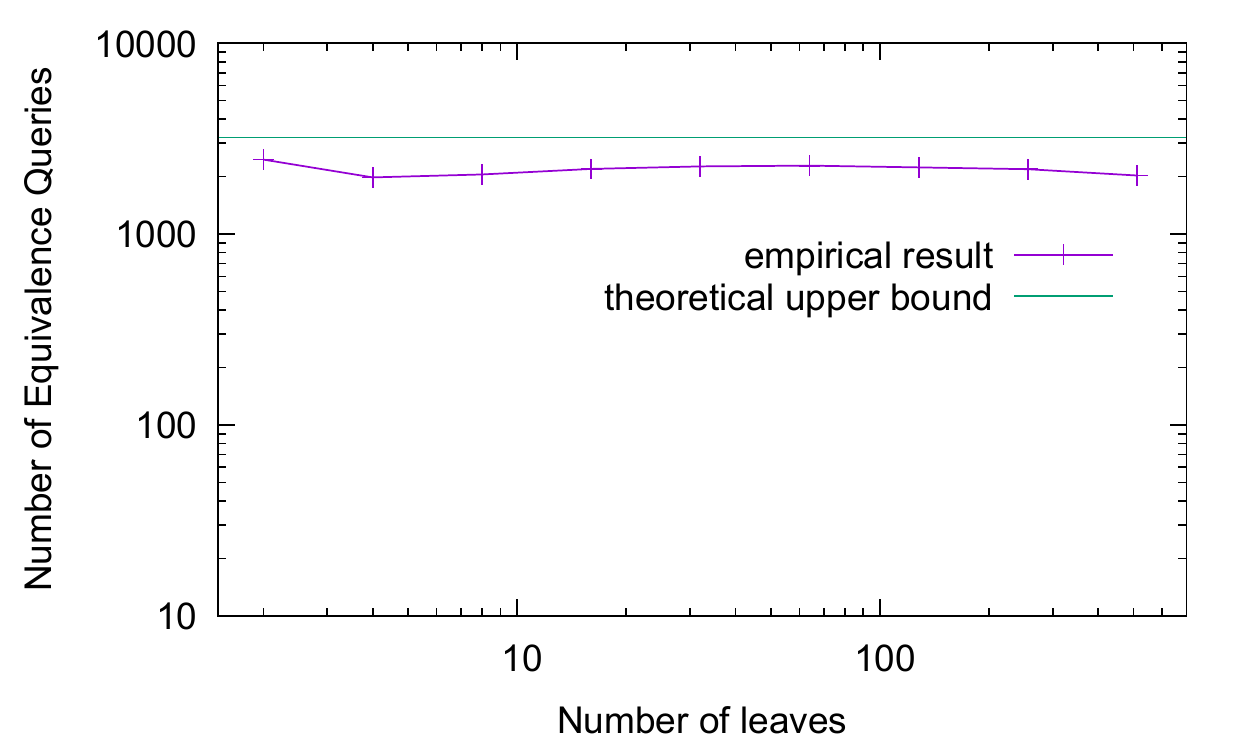}\\
\end{tabular}
}
  \caption{The number of membership and equivalence queries (1) for various number of nodes, (2) for various number of variables, and (3) for various number of leaves.}\label{exp:synthetic}
\end{figure}

The results are shown in Figure~\ref{exp:synthetic}.
The tables and line graphs in the left column are the results for membership queries
and those in the right column are for equivalence queries.
The both axes are log-scaled.
Numerical values in the tables are rounded to three significant digits.
Theoretical upper bounds ($2n(\lceil \log_2 m\rceil +3n)$ for membership queries, $n$ for equivalence queries, where $n, m$
are the numbers of nodes and variables, respectively) are also shown for comparison.
As for the number of nodes, the theoretical upper bound query numbers for membership and equivalence queries are $O(n^2)$ and
$O(n)$, respectively, and those orders of increasing are observable in row (1) of the figure.
With respect to the number of variables, $O(\log m)$ and $O(1)$ are the orders of increasing for membership and equivalence  queries, respectively, but not only the number of equivalence queries but also that of membership queries looks almost constant in row (2) of the figure. The upper bound on the number of membership queries is $6400\lceil\log_2 m\rceil+6\times 6400^2$, thus additional $6400$ queries are required for two times larger number of variables, and $6400$ is small compared to $6\times 6400^2$.
The numbers of both the queries are not affected by the number of leaves as shown in raw (3) of the figure, though
the number of membership queries looks reduced in the case that the number of leaves is very small ($K=2,4$).
The number of distinct functions becomes larger as the number of leaves increase, which means the complexity of the function class increases. That might be the reason of this phenomenon.

\subsection{Benchmark Datasets}

One of applications is transformation of learned classifiers to OMTBDDs.
If compact OMTBDD representations of learned classifiers are obtained,
those are more appropriate for hardware implementation with resource-limited devices
and more tractable because various operations between functions are available for OMTBDDs.

\begin{table}[t]
  \caption{Datasets of UCI Machine Learning Repository \citep{Dua:2017} that is used in our experiment.}\label{tbl:dataset}
{\scriptsize 
  \begin{center}
    \begin{tabular}{lrrrl}
\hline
dataset & \#data & \#feature & \#class  & details\\
\hline
iris & 150 & 4 & 3 & Iris \\
parkinsons & 195 & 22 & 2 & Parkinsons\\
breast cancer & 569 & 30 & 2 & Breast Cancer Wisconsin (Diagnostic) \\
blood & 748 & 4 & 2 & Blood Transfusion Service Center \\
RNA-Seq PANCAN & 801 & 20531 & 5 & gene expression cancer RNA-Seq \\
wine quality red & 1599 & 11 & 11 & Wine Quality \\
wine quality white & 4898 & 11 & 11 & Wine Quality\\
waveform & 5000 & 40 & 3 & Waveform Database Generator (Version 2)\\
robot & 5456 & 24 & 4 & Wall-Following Robot Navigation \\
musk & 6598 & 166 & 2 & Musk (Version 2) \\
epileptic seizure & 11500 & 178 & 5 & Epileptic Seizure Recognition \\
magic & 19020 & 10  & 2 & MAGIC Gamma Telescope\\
\hline
  \end{tabular}
  \end{center}
  }
\end{table}

We used $12$ datasets registered in UCI Machine Learning Repository \citep{Dua:2017} that are composed of real-valued features only.
(See Table~\ref{tbl:dataset})
The process of our experiment for each dataset is as follows.
\begin{enumerate}
\item A tree-based classifier is learned using a given dataset.
\item Branching conditions of component decision trees are reduced by the branching condition sharing algorithm Min\_DBN \citep{N19}. The tree-based classifier is converted to a simpler classifier by sharing the branching conditions.
\item Each training data is converted to a binary feature data using each distinct branching condition of the converted classifier as a binary feature.
\item The variable ordering of the binary features is decided as follows.
  Consider a directed graph that is composed of vertices corresponding to the binary features.
  For each pair of two binary features $(x_i,x_j)$, count the number of occurrences that node labeled $x_i$ is an ancestor of node labeled $x_j$ and the number of occurrences that the opposite relation holds. Define the direction of edges between
  vertices corresponding to $x_i$ and $x_j$ as that from the vertex corresponding to the feature whose number of ancestor occurrences is more than the other feature's number of ancestor occurrences.
  Also define the weight of the directed edge as the difference between their number of ancestor occurrences.
  Remove edges from those with the smallest weights until the topological sorting of the whole vertices is succeeded.
  Define the binary feature order as the order of corresponding vertices.
\item An OMTBDD is learned by QLearn-OMTBDD from the set of the converted labeled binary feature data using the converted tree-based classifier as the membership oracle and using consistency with all the given data as equivalence to the target function for an equivalence query and returning an inconsistent given data as a counter example.
  Note that we only use the data whose labels are predicted correctly by the converted tree-based classifier.
\end{enumerate}

As tree-based classifiers, a decision tree and a random forest are used in our experiment.
We adopt DecisionTreeClassifier and RandomForestClassifier of scikit-learn version 1.1.dev0.
The number of component trees (n\_estimators) in RandomForestClassifier is set to $100$.
Other parameters but n\_jobs and random\_state are set to defaults in RandomForestClassifier: n\_jobs$=-1$ (the number of jobs to run in parallel is set to the number of processors), random\_state$=0$ (the seed of randomized selections is set to $0$).
Parameter random\_state (the seed of random permutation of features at each split) is set to $0$ in DecisionTreeClassifier
and other parameters are set to defaults.
We evaluated compactness and accuracy of the learned OMTBDDs by the number of nodes and
accuracy for test datasets separated from training datasets by $5$-fold crossvalidation.
The largest two datasets, epileptic seizure and magic, are too large for our query learning algorithm to learn an OMTBDD
from the random forest classifier learned using them, thus we exclude them from our experiments for random forest classifiers.

\begin{table}[tb]
  \caption{Number of nodes and accuracy of OMTBDDs learned from decision trees.}\label{result:tree}
\begin{center}
  {\scriptsize
  \begin{tabular}{|l|c@{}c@{\ }ccc|cc|}
    \hline
      \multirow{2}{*}{dataset} & \multicolumn{5}{|c|}{decision tree} & \multicolumn{2}{|c|}{OMTBDD} \\
      & \#node & (l. share) & accuracy & \#DC & \#RDC& \#node & accuracy \\
  \hline
  iris & 14.6 & (9.8) & 0.947 & 6.8 & 6.8 & 9.8 & 0.947\\
  parkinson & 25.4 & (14.2) & 0.856 & 12.2 & 11.8 & 14.2 & 0.856\\
  breast cancer & 37.4 & (20.2) & 0.924 & 18.2 & 18.2 & 19.8 & 0.924\\
  blood & 320 & (162) & 0.713 & 106 & 72.2 & 342 & 0.722\\
  RNA-Seq PANCAN & 14.6 & (11.8) & 0.974  &  6.8 & 6.8 & 11.8 & 0.974\\
  wine quality red & 668 & (345) & 0.650 & 300 & 179 & 3811 & 0.652\\
  wine quality white & 2088 & (1050) & 0.604 & 769 & 370 & 29700 & 0.598\\
  waveform & 797 & (401) & 0.735 & 397 & 304 & 3883 & 0.743\\
  robot & 57.4 & (32.7) & 0.995 & 28.0 & 22.4 & 49.0 & 0.996\\
  musk & 252 & (128) & 0.965 & 126 & 121 & 123 & 0.964\\
  epileptic seizure & 3690 & (1850) & 0.472 & 1830 & 1290 & 722000 & 0.452\\ 
  magic & 3200 & (1600) & 0.818 & 1600 & 771 & 49800 & 0.820\\
  \hline
  \end{tabular}
  }
\end{center}
\end{table}
\begin{figure}[tb]
\begin{center}
\includegraphics[width=\textwidth]{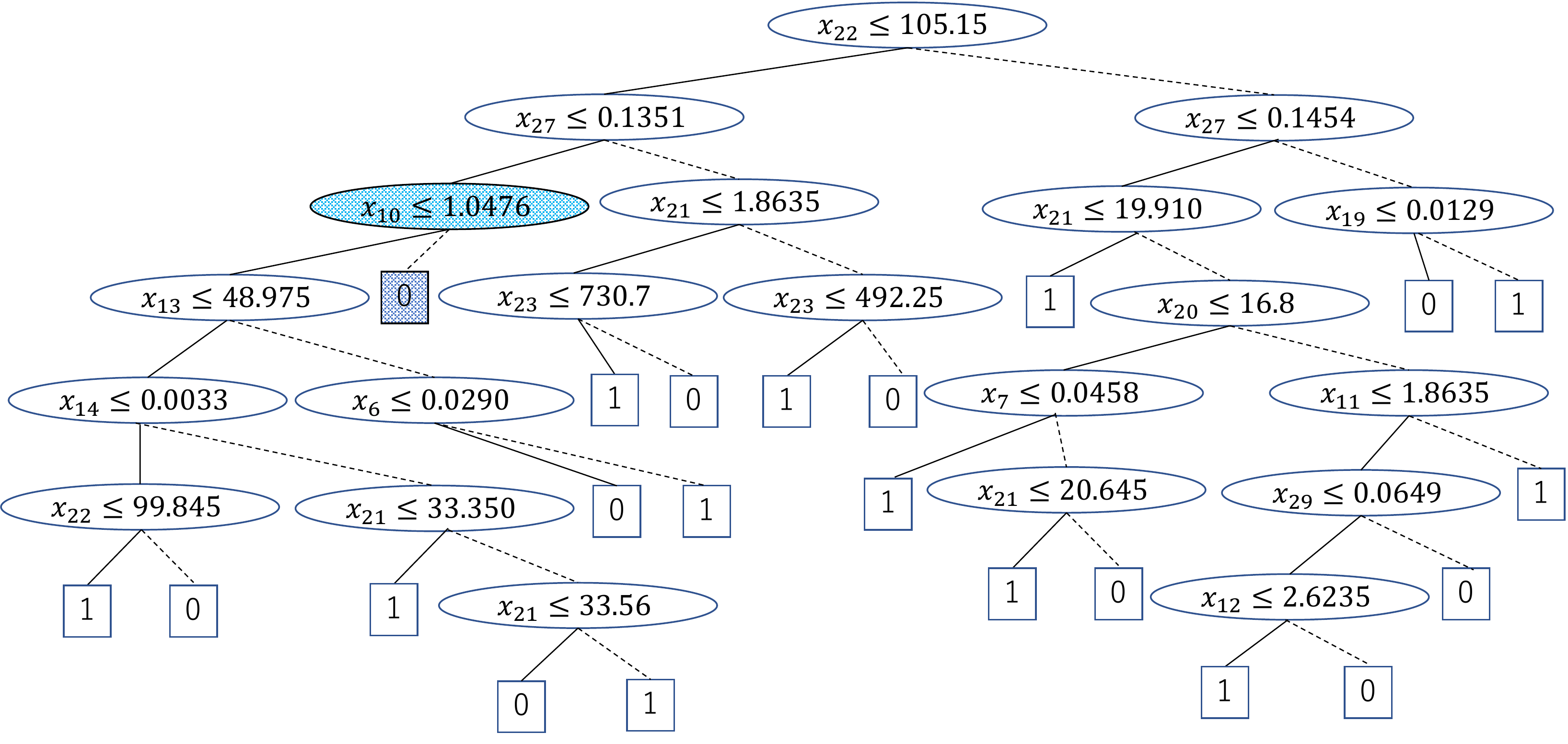}
\includegraphics[width=\textwidth]{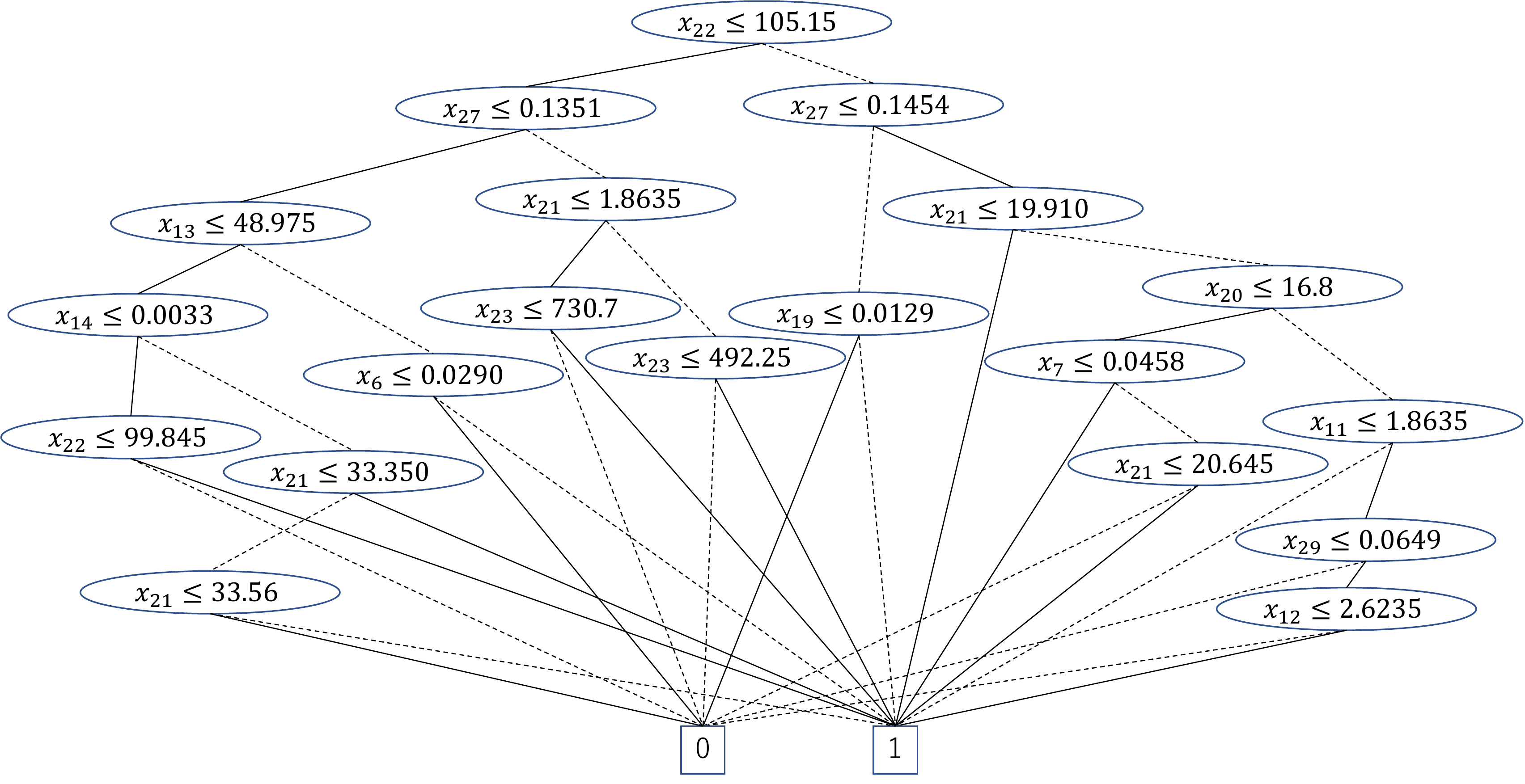}  
\end{center}
\caption{Decision tree for breast cancer dataset and the OMTBDD learned from it.}\label{fig:DT-OMTBDD}
\end{figure}

\begin{table}[tb]
  \caption{Number of nodes and accuracy of OMTBDDs learned from random forests.}\label{result:forest}
\begin{center}
  {\scriptsize
  \begin{tabular}{|l|c@{}c@{\ }ccc|cc|}
    \hline
      \multirow{2}{*}{dataset} & \multicolumn{5}{|c|}{random forest} & \multicolumn{2}{|c|}{OMTBDD} \\
      & \#node & (l. share) & accuracy & \#DC & \#RDC& \#node & accuracy \\
  \hline
  iris & 1430 & (965) & 0.947 & 104 & 43.6 & 18.2 & 0.947\\
  parkinson & 2660 & (1480) & 0.892 & 1010 & 404 & 147 & 0.872\\
  breast cancer & 3570 & (1940) & 0.961 & 1420 & 594 & 203 & 0.944\\
  blood & 23200 & (11800) & 0.749 & 322 & 145 & 1520 & 0.723\\
  RNA-Seq PANCAN & 3960 & (2430) & 0.996 & 1920 & 1840 & 3540 & 0.885\\
  wine quality red & 55900 & (29000) & 0.692 & 4120 & 900 & 271000 & 0.598\\
  wine quality white & 181000 & (91600) & 0.679 & 7290 & 1400 & 2080000 & 0.594\\
  waveform & 83800 & (42200) & 0.854 & 32500 & 5870 & 2010000 & 0.689\\
  robot & 25000 & (12900) & 0.995 & 9900 & 3140 & 1010 & 0.996\\
  musk & 34600 & (17500) & 0.975 & 12200 & 6240 & 574000 &  0.908\\
  \hline
  \end{tabular}
  }
  \end{center}
\end{table}

Results are shown in Table~\ref{result:tree} for decision trees and in Table~\ref{result:forest} for random forests.
In the tables, `\#node' is the number of nodes and `(l.share)' means that of its leaf-shared tree-based classifier  whose same-labeled leaves share a single leaf. An OMTBDD has only one leaf with the same label, thus comparison in number of nodes to the original tree-based classifier should be done in that of its leaf-shared form.
`\#DC' is the number of distinct branching conditions in an original tree-based classifier and
`\#RDC' means that in the classifier reduced by the branching condition sharing algorithm Min\_DBN.
All the numbers are rounded to three significant digits.
For simple classifier problem, in which the original decision tree has less than 30 nodes,
their leaf-shared decision trees are already OMTBDDs, and query learning algorithm learns them exactly.
(See the results for iris, parkinson, RNA-Seq PANCAN datasets.)
Interesting results are those for breast cancer and musk datasets;
the OMTBDD size is smaller than the original leaf-shared decision tree size for those.
What happened in one execution of our $5$-fold crossvalidation for breast cancer dataset is shown in Figure~\ref{fig:DT-OMTBDD}.
The corresponding leaf-shared tree of the decision tree in the upper figure is already an OMTBDD,
and the OMTBDD in the lower figure is the one learned from it. The difference between them is the shaded part.
The original decision tree is constructed in top-down manner, and in that manner, $x_{10} \le 1.0476$
is a good condition to classify. After constructing all the descendants, all the training data classified into $0$ by
the condition are found to be classified also into $0$ in the descendant part. Thus, the condition is not needed.
For other larger classifiers except that for epileptic seizure dataset, OMTBDD size is less than 30 times larger than the leaf-shared decision tree size of the original decision tree, and no significant accuracy deterioration is observed.
For epileptic seizure dataset, OMTBDD size is $165$ times larger and accuracy is also deteriorated.
Decision boundary of the decision tree for epileptic seizure dataset is guessed to be complex compared to those for the other datasets in the given variable ordering, and the OMTBDD that is consistent with the given data might not be a good approximation for the original decision tree.
As for random forest classifier results, original classifiers' accuracies for all the datasets are improved or the same.
Accuracies of OMTBDDs learned from them, however, are the same or deteriorated.
Accuracy deterioration is small for the datasets for which size reduction by the learned OMTBDD is succeeded: iris, parkinson, breast cancer, blood, and robot.
Especially for parkinson and breast cancer datasets, accuracies of the OMTBDDs learned from the random forest classifiers are better than those learned from the decision tree classifiers. Those OMTBDDs are meaningful in the point that their accuracies are better than the decision trees and their size are smaller than the random forests.
For other $5$ datasets, accuracies are deteriorated significantly and sizes become $1.5$-$48$ times larger.
The reason why such deterioration occurred for the $5$ datasets seems the same as the above-mentioned reason for decision tree accuracy deterioration using epileptic seizure dataset. 

\section{Conclusions and Future Work}

We developed a query learning algorithm QLearn-OMTBDD for OMTBDDs by extending QLearn-$\pi$-OBDD \citep{N05} for OBDDs.
In our algorithm, the length-$(i-1)$ prefix of an assignment is classified into a node with label $x_i$ or $\mu$ (no node corresponds to the assignment prefix) by the node classification tree for length $i-1$ using membership queries, and the fact that
the number of possible answers is more than two for each membership query, prevents straightforward extension of the classification trees and their operating procedure.
In the experiments using benchmark datasets, we showed possibility of our algorithm's application to a classification problem by constructing compact and accurate OMTBDDs for some datasets.
On the other hand, there are other datasets for which learned OMTBDDs have a lot of nodes and low accuracy.
we would like to clarify whether there are compact and accurate OMTBDDs for such datasets or not.

% Acknowledgments---Will not appear in anonymized version
\section*{Acknowledgments}

This work was supported by JST CREST Grant Number JPMJCR18K3, Japan.

%\acks{This work was supported by JST CREST Grant Number JPMJCR18K3, Japan.}

%% The Appendices part is started with the command \appendix;
%% appendix sections are then done as normal sections
%% \appendix

%% \section{}
%% \label{}

%% If you have bibdatabase file and want bibtex to generate the
%% bibitems, please use
%%
\bibliographystyle{elsarticle-num-names} 
\bibliography{dam2023}

%% else use the following coding to input the bibitems directly in the
%% TeX file.

%\begin{thebibliography}{00}

%% \bibitem{label}
%% Text of bibliographic item

%\bibitem{}

%\end{thebibliography}

\appendix

\section{Proof of Theorem~\ref{th}}\label{thproof}

The following proofs of lemma~\ref{lem:fcond},~\ref{lem:2} and Theorem~\ref{th} for OMTBDDs are extensions of the proofs of Lemma~3,~4 and Theorem~5 for OBDDs in \cite{N05}.
We write the whole detailed proofs for selfcontainedness..

\begin{lemma}\label{lem:fcond}
For a reduced OMTBDD $D$, assume that an OMTBDDAS $S$ and
node classification trees $T_i$ $(i=1,...,m)$ satisfy conditions CN, CT, and CE.
Then, $\mathcal{D}(S)=D$ if the cardinality of $V(S)$ is exactly the number of nodes in $D$.
\end{lemma}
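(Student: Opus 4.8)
The plan is to build a label-preserving isomorphism $\phi$ of rooted DAGs between $\mathcal{D}(S)$ and $D$; since the function an OMTBDD represents depends only on its variable- and value-labeled DAG isomorphism type, this yields $\mathcal{D}(S)=D$ at once. Define $\phi$ on node ids by sending $\bm{v}\in V(S)$ to the node of $D$ reached by $\bm{v}$. By CN.1 this is well defined, by CN.3 it is injective, and since $|V(S)|$ equals the number of nodes of $D$ it is a bijection onto the node set of $D$; equivalently, $V(S)$ is a node id set of $D$. In particular, for every length, a string $\bm{a}$ reaches a node of $D$ if and only if $\bm{a}\stackrel{D}{=}\bm{v}$ for the unique $\bm{v}\in V(S)$ in its class, and otherwise $\bm{a}\notin\mbox{nodes}(D)$.

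Next I would transport conditions CT and CE off the classifiers and onto $D$. Property P1 holds (classification consults only membership values of $D$, and two access strings reaching the same node feed the same subfunction, hence return the same answers), so equivalent access strings of equal length get the same $T$-label; combined with CT.1, CT.2 and the previous paragraph this gives, for every $i$ and every binary string $\bm{a}$ of the appropriate length, $T_i(\bm{a})=\bm{v}\in V(S) \Leftrightarrow \bm{a}\stackrel{D}{=}\bm{v}$ and $T_i(\bm{a})=\mu \Leftrightarrow \bm{a}\notin\mbox{nodes}(D)$. Feeding this into CE, every edge $(\bm{u},\bm{v})\in E(S)$ satisfies (i) $\bm{u}\cdot l(\bm{u},\bm{v})\stackrel{D}{=}\bm{v}$ and (ii) $\bm{u}\cdot\mathrm{pre}(l(\bm{u},\bm{v}),j-|\bm{u}|)\notin\mbox{nodes}(D)$ for all $|\bm{u}|<j<|\bm{v}|$.

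Then I would check that $\phi$ preserves labels and edges. Labels: node $\bm{v}\in V(S)$ carries variable label $x_{|\bm{v}|+1}$, and so does $[\bm{v}]$ since $\bm{v}$ is a length-$|\bm{v}|$ access string of it; when $|\bm{v}|=m$ both are sinks whose values agree by CN.2. Edges: $\mathcal{D}(S)$ is a genuine OMTBDD, so each non-sink $\bm{v}$ has exactly a $0$-edge and a $1$-edge, namely the two edges of $S$ out of $\bm{v}$, whose label strings begin with different bits. Fix the edge $(\bm{v},\bm{w})\in E(S)$ whose label begins with $b$. Trace the path of the assignment $\bm{v}\cdot l(\bm{v},\bm{w})$ through $D$: its first $|\bm{v}|$ bits, namely $\bm{v}$, just reach $[\bm{v}]$, so the next bit consumed there is $b$ and the path takes the $b$-edge of $[\bm{v}]$ to some node $N'$, while the whole assignment just reaches $[\bm{w}]$ by (i). Were $N'$ labeled $x_c$ with $|\bm{v}|+1<c\le|\bm{w}|$, then $\mathrm{pre}(\bm{v}\cdot l(\bm{v},\bm{w}),c-1)=\bm{v}\cdot\mathrm{pre}(l(\bm{v},\bm{w}),c-1-|\bm{v}|)$ would be an access string of $N'$, contradicting (ii) with $j=c-1$; hence $N'$ is labeled $x_{|\bm{w}|+1}$, and being just reached by a length-$|\bm{w}|$ string it is $[\bm{w}]$. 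So the $b$-edge of $\phi(\bm{v})$ ends at $\phi(\bm{w})$. Applying this to every non-sink node and, via $\phi^{-1}$, to every non-sink node of $D$, all edges of the two diagrams are matched; in particular $\phi$ sends the unique node of $\mathcal{D}(S)$ with no incoming edge (its root) to the root of $D$, so the two labeled rooted DAGs are isomorphic and $\mathcal{D}(S)=D$.

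I expect the edge-preservation step to be the crux. The subtlety is that edges may skip variables, so one must keep exact track of which bit of the assignment is consumed at which node, and it is precisely CE.2 — in the form (ii) above — that forbids the ``intermediate'' access strings that would otherwise allow the $b$-edge of $\phi(\bm{v})$ to land short of $\phi(\bm{w})$ (on an inner node or an early sink). A minor side issue is the bookkeeping around the dummy root of $S$ and the degenerate case where $D$'s root is labeled $x_1$; I would dispense with it by noting that edge-preservation already forces $\phi$ to identify the two roots.
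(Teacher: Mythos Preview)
Your proof is correct and follows essentially the same route as the paper's: define the node map via access strings, use CN for bijectivity and label preservation, and then combine CT with P1 to turn CE into statements about $D$ that pin down the edges. The only cosmetic difference is direction---you show each $b$-edge of $\mathcal{D}(S)$ lands on the right node in $D$ and then invoke the equal edge count, whereas the paper starts from a $b$-edge in $D$ and case-splits on the length of the corresponding $S$-edge's target to reach the same conclusion.
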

\begin{proof}
Let $N(\mathcal{D}(S))$ and $N(D)$ be the set of nodes in $\mathcal{D}(S)$ and $D$, respectively.
  Define mapping $M:N(\mathcal{D}(S))\rightarrow N(D)$ that maps node $\bm{v}\in V(S)$ in $\mathcal{D}(S)$
  to the node with access string $\bm{v}$ in $D$.
  In order to prove $\mathcal{D}(S)=D$, we prove
  $\langle$1$\rangle$ $M$ is well-defined, one-to-one and onto,
  $\langle$2$\rangle$ $M$ preserves node labels,
  and $\langle$3$\rangle$ $M$ preserves edge relations and labels.
$\langle$1$\rangle$ is proved easily;
mapping $M$ is well-defined by CN (1), one-to-one by CN (3), and onto
by the assumption that $|V(S)|$ is equal to the number of nodes
in $D$.
Both $\mathcal{D}(D)$ and $D$ are OMTBDDs for the same variable ordering,
and nodes are mapped to the nodes with the same access strings,
so internal node labels are the same.
The sink labels are guaranteed to be the same by CN (2).
Thus $\langle$2$\rangle$ holds.

$\langle$3$\rangle$ is proved as follows.
Since $\langle$1$\rangle$ and $\langle$2$\rangle$ hold, the number of non-sink nodes of $\mathcal{D}(S)$ and $D$ are the same,
so the number of edges is the same.
Thus, it is enough to prove that,
for any $\bm{v}_1,\bm{v}_2\in V(S)$, if there exists a $b$-labeled edge between the nodes with access strings $\bm{v}_1$ and $\bm{v}_2$ in $D$ , then $(\bm{v}_1,\bm{v}_2)\in E(S)$ and 
the first bit of $l(\bm{v}_1,\bm{v}_2)$ is $b$ in $S$.

Assume that there exists a $b$-labeled edge between the nodes with access strings $\bm{v}_1$ and $\bm{v}_2$ 
in $D$ for $\bm{v}_1,\bm{v}_2\in V(S)$.
Let $(\bm{v}_1,\bm{v})$ be the edge in $S$ which goes out from node $\bm{v}_1$ 
and is labeled by a string with the first bit $b$.
Assume that $|\bm{v}|<|\bm{v}_2|$.
Since $\bm{v}_1\cdot l(\bm{v}_1,\bm{v}) \not\in \mbox{nodes}(D)$,
$T_{|v|}(\bm{v}_1\cdot l(\bm{v}_1,\bm{v}))=\mu$ by CT (2),
thus $(\bm{v}_1,\bm{v}) \mbox{$\not\in$} E(S)$ by CE (1),
which is a contradiction.
Hence, $|\bm{v}|\geq |\bm{v}_2|$.
Assume that $|\bm{v}| > |\bm{v}_2|$.
Since $\bm{v}_1\cdot\mathrm{pre}(l(\bm{v}_1,\bm{v}),|\bm{v}_2|-|\bm{v}_1|)\stackrel{D}{=}\bm{v}_2$,
$T_{|\bm{v}_2|}(\bm{v}_1\cdot\mathrm{pre}(l(\bm{v}_1,\bm{v}),|\bm{v}_2|-|\bm{v}_1|))=\bm{v}_2$ by CT (1) and P1,
which contradicts CE (2).
Therefore, $|\bm{v}|=|\bm{v}_2|$.
Since $\bm{v}_1\cdot l(\bm{v}_1,\bm{v})\stackrel{D}{=}\bm{v}_2$,
$T_{|\bm{v}_2|}(\bm{v}_1\cdot l(\bm{v}_1,\bm{v}))=\bm{v}_2$ by CT (1) and P1.
On the other hand, $T_{|\bm{v}_2|}(\bm{v}_1\cdot l(\bm{v}_1,\bm{v}))=\bm{v}$ by CE (1).
Hence, $\bm{v}=\bm{v}_2$.
This means $(\bm{v}_1,\bm{v}_2)\in E(S)$ and 
the first bit of $l(\bm{v}_1,\bm{v}_2)$ is $b$ in $S$.
Thus, $\langle$3$\rangle$ holds.
\end{proof}

\begin{lemma}\label{lem:2}
For a target OMTBDD $D$,
assume that an OMTBDDAS $S$ and classification trees $T_i$ 
for $i=1,...,m$ satisfy CN, CT and CE
and that $S$ has at least two sinks.
Let $\bm{e}$ be a counterexample of $D$ for $\mathcal{D}(S)$.
Let $(S',T'_1,\dots,T'_m)$ denote the output of  
algorithm Update-Hypothesis for $(S,T_1,...,T_m,\bm{e})$.
Then, $(S',T'_1,\dots,T'_m)$ satisfies CN, CT and CE, and
$|V(S')|\geq |V(S)|+1$.
\end{lemma}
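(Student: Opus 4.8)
The plan is to go through \textsc{Update-Hypothesis} one branch at a time — the call to \textsc{NodeSplit} and the call to \textsc{NewBranchingNode} — and, in each branch, to re-establish CN, CT, CE and the bound $|V(S')|\ge|V(S)|+1$, treating the recursive subroutine \textsc{AddEdge} as a separately stated claim. First I would dispose of the bookkeeping common to both branches: because $\bm{e}$ is a counterexample and $S$ has at least two sinks, the path of $\bm{e}$ in $S$ visits $k\ge2$ nodes and ends at a leaf $\bm{p}_k$ with $D(\bm{p}_k)\ne D(\bm{e})$, so (by the telescoping observation already in the text) an index $i$ as on Line~\ref{alg:find-i} exists and is found with finitely many membership queries, and the branch taken is governed by whether $D(\bm{p}_i\,\bm{q}\,\bm{e}_{i+1})$ equals $D(\bm{p}_k)$, where $\bm{q}=l(\bm{p}_i,\bm{p}_{i+1})$.

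The key step in each branch is to prove that the inserted node $\bm{v}$ is legitimate: $\bm{v}\in\mbox{nodes}(D)$ and $\bm{v}\stackrel{D}{\ne}\bm{v}'$ for all $\bm{v}'\in V(S)$, which yields CN~(1), CN~(3) and the cardinality increase at once. In \textsc{NodeSplit}, $\bm{v}=\bm{p}_i\,l(\bm{p}_i,\bm{p}_{i+1})$: CE~(1) for the edge $(\bm{p}_i,\bm{p}_{i+1})$ gives $T_{|\bm{v}|}(\bm{v})=\bm{p}_{i+1}\ne\mu$, hence $\bm{v}\in\mbox{nodes}(D)$ by the contrapositive of CT~(2); and $\bm{v}\stackrel{D}{=}\bm{v}'\in V(S)$ would force, via P1 and CT~(1), $\bm{v}'=T_{|\bm{v}|}(\bm{v}')=T_{|\bm{v}|}(\bm{v})=\bm{p}_{i+1}$, contradicting the branch condition $D(\bm{v}\,\bm{e}_{i+1})\ne D(\bm{p}_k)=D(\bm{p}_{i+1}\bm{e}_{i+1})$. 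In \textsc{NewBranchingNode}, CE~(2) for $(\bm{p}_i,\bm{p}_{i+1})$ gives $T_{|\bm{v}|}(\bm{v})=\mu$, so the same P1/CT~(1) argument excludes $\bm{v}\stackrel{D}{=}\bm{v}'\in V(S)$; and $\bm{v}\in\mbox{nodes}(D)$ follows from the choice of $j$ on Line~\ref{alg:newbranching:j} once one checks the identities $\bm{v}\bm{r}=\bm{p}_i\,\mathrm{cro}(\bm{q},\bm{f},j)\,\bm{e}_{i+1}$ and $\bm{v}\dot{\bm{r}}=\bm{p}_i\,\mathrm{cro}(\bm{q},\bm{f},j-1)\,\bm{e}_{i+1}$, which give $D(\bm{v}\bm{r})\ne D(\bm{p}_k)=D(\bm{v}\dot{\bm{r}})$: if $\bm{v}\notin\mbox{nodes}(D)$ then in $D$ the value $D(\bm{v}s)$ cannot depend on the first bit of $s$ (the node reached after the bits of $\bm{v}$ is a sink or lies past position $|\bm{v}|$, so $x_{|\bm{v}|+1}$ is never tested), whence $D(\bm{v}\bm{r})=D(\bm{v}\dot{\bm{r}})$, a contradiction.

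Once $\bm{v}$ is legitimate, the remainder of each branch is careful but routine. I would check that the surgery on $T_{|\bm{v}|}$ — replacing the leaf $\bm{p}_{i+1}$ by the single-test gadget $T^{\bm{e}_{i+1}}_{|\bm{v}|}$, resp.\ the $\mu$-leaf by the twin-test gadget $T^{\bm{r}}_{|\bm{v}|}$ — together with the edge-redirection loops (over incoming edges of $\bm{p}_{i+1}$, resp.\ over edges $(\bm{v}_1,\bm{v}_2)$ straddling level $|\bm{v}|$) restores CT and CE: the rewritten edges keep labels that were already valid proper prefixes, so CE~(2) survives; they are re-pointed to precisely the leaf named by the membership query that is asked, so CE~(1) survives; the gadget's test string genuinely separates the two leaves it joins, so CT~(1) persists for $\bm{v}$, $\bm{p}_{i+1}$ and every previously classified id; and the gadget keeps the old twin-test prefix on its $\mu$-branch, so CT~(2) persists. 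CN~(2) is immediate because the only new sinks are created on Line~\ref{alg:addedge:sink} of \textsc{AddEdge} already carrying their own $D$-value.

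The hard part will be the analysis of \textsc{AddEdge}, which I would isolate as a claim: if $S,T_1,\dots,T_m$ satisfy CN, CT, CE, if $\bm{v}\in V(S)$ has no outgoing edge whose first bit equals that of $\bm{t}$, and if the corresponding successor of node $\bm{v}$ in $D$ is reached along a path that re-enters the hypothesis only strictly beyond position $|\bm{v}|$, then \textsc{AddEdge}$(\bm{v},\bm{t})$ adds that missing edge and, recursively, finitely many genuine new nodes of $D$ together with their own missing edges, all while preserving CN, CT, CE. The subtlety is that the hypothesis trees may be incomplete, so $T_{|\bm{v}|+j}(\bm{v}\,\mathrm{pre}(\bm{t},j))$ may return a single-test node's label, say with test string $\bm{s}$, rather than $\mu$ or a leaf; one must then argue that $\bm{u}'=\bm{v}\,\mathrm{pre}(\bm{t},j)$ is forced to be a node of $D$ that is distinct under $\stackrel{D}{=}$ from every id currently sitting below that single-test node — since, by CT~(1), none of those ids can carry the edge label $D(\bm{u}'\bm{s})$ that $\bm{u}'$ would follow — so that attaching the new leaf labeled $\bm{u}'$ there and the matching new internal node or sink to $S$ is consistent, and that the two recursive calls \textsc{AddEdge}$(\bm{u}',\bm{t})$ and \textsc{AddEdge}$(\bm{u}',\dot{\bm{t}})$ inherit the hypotheses of the claim with a strictly longer id, which also gives termination. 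Granting this claim, $|V(S')|\ge|V(S)|+1$ is immediate (the primary $\bm{v}$ alone witnesses it), and CN, CT, CE for $(S',T'_1,\dots,T'_m)$ follow by assembling the per-branch verifications above.
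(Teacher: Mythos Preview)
Your proposal is essentially correct and hits all the same key ideas as the paper's proof: legitimacy of the primary inserted node $\bm{v}$ via CE/CT in each branch, the string identities in \textsc{NewBranchingNode} that yield $D(\bm{v}\bm{r})\ne D(\bm{v}\dot{\bm{r}})$, and the analysis of \textsc{AddEdge} when classification stalls at a single-test node with a missing edge label.

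The organization differs, however, and the paper's decomposition is a bit cleaner. Rather than maintaining CN, CT and CE simultaneously branch-by-branch, the paper separates concerns: it first proves (as two claims) that CN and CT are preserved after \emph{each individual} node addition together with its accompanying tree update, and only \emph{afterwards} proves CE for the final $(S',T'_1,\dots,T'_m)$. This matters because CE genuinely fails at intermediate stages --- e.g., right after the gadget $T^{\bm{e}_{i+1}}_{|\bm{v}|}$ is installed but before the incoming-edge loop of \textsc{NodeSplit} has redirected all edges into $\bm{p}_{i+1}$ --- so your attempt to thread CE through \textsc{AddEdge} as part of its invariant would need some care about exactly what ``preserving CE'' means mid-execution.

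One concrete omission: your ``edge-redirection loop'' description of Lines~6--17 of \textsc{NodeSplit} understates what happens there in the OMTBDD setting. When $D(\bm{v}_1\bm{q}'\bm{e}_{i+1})$ is a value for which $T^{\bm{e}_{i+1}}_{|\bm{v}|}$ has no outgoing edge yet, the loop does not merely redirect an edge: it creates a \emph{new} node $\bm{v}'=\bm{v}_1\bm{q}'$ in $S$, a new leaf in $T^{\bm{e}_{i+1}}_{|\bm{v}|}$, and then calls \textsc{AddEdge} twice on $\bm{v}'$. You would need to argue legitimacy (CN(1),(3)) for each such $\bm{v}'$ as well --- the paper handles this in its second claim by the same mechanism you describe for \textsc{AddEdge} (classification hits an internal single-test node, so $\bm{v}'$ passed a twin test and is therefore in $\mbox{nodes}(D)$, and distinctness follows from the missing edge label). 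It is the same argument, but your sketch does not visibly route this case anywhere.
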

\begin{proof}
  First of all, note that, if $T_{|\bm{v}|}(\bm{v})\neq \mu$, then $T_{|\bm{v}|}(\bm{v})\neq \mu$ holds after any update of $T_{|\bm{v}|}$ done in algorithms NodeSplit and NewBranchingNode.
  This is because the path from the root to the leaf labeled $\mu$ in $T_{|\bm{v}|}$ contains twin-test nodes only except the $\mu$-labeled leaf node itself,
  and twin-test nodes are never updated, $\mu$-labeled leaves are never added to single-test nodes and any non-$\mu$-labeled leaf is not replaced with a subtree with $\mu$-labeled leaf in those algorithms.
  
  One execution of Algorithm Update-Hypothesis might add more than one node.
  First, we prove that satisfaction of two conditions CN and CT is preserved after the first node addition and its accompanying   node classification tree updates through Claim~\ref{claim1}.
  Then, any following one node addition and its accompanying node classification tree updates are also proved to preserve the property of satisfying CN and CT. Finally, we prove that satisfaction of CE is guaranteed after all the additions and their accompanying node classification tree updates.

\begin{claim}\label{claim1}
Let $(\tilde{S},\tilde{T}_1,\dots,\tilde{T}_m)$ be the OMTBDDAS and the node classification trees right after the first addition of node $\bm{v}$ and
    its accompanying update of $T_{|\bm{v}|}$ in algorithm Update-Hypothesis for $(S,T_1,...,T_m,\bm{e})$. Then, CN and CT are satisfied by $(\tilde{S},\tilde{T}_1,\dots,\tilde{T}_m)$.
\end{claim}
\begin{proof}[Proof of Claim~\ref{claim1}]
In NodeSplit, the first new node $\bm{v}=\bm{p}_il(\bm{p}_i,\bm{p}_{i+1})$ is added to $S$ at Line~\ref{alg:nodesplit:addnode}.
For this $\bm{v}$, $\bm{v}\in \mbox{nodes}(D)$ is implied by CT (2) because $T_{|\bm{v}|}(\bm{v})=T_{|\bm{v}|}(\bm{p}_il(\bm{p}_i,\bm{p}_{i+1}))=\bm{p}_{i+1}\neq \mu$ holds by CE (1), thus CN (1) holds for $\tilde{S}$.
Node $\bm{v}$ cannot be a sink because, if so, $D(\bm{v})=D(\bm{p}_{k-1}l(\bm{p}_{k-1},\bm{p}_k))=D(\bm{p}_k)\neq D(\bm{e})$ by CE (1), which contradicts the fact that NodeSplit is executed.
Therefore, $V_m(\tilde{S})=V_m(S)$ and thus CN (2) holds for $\tilde{S}$ by the assumption for $S$.
We can show $\bm{v}\stackrel{D}{\neq}\bm{v}'$ for any $\bm{v}'\in V(S)$ as follows.
It is trivial if $|\bm{v}|\neq |\bm{v}'|$, so assume that $\bm{v}'\in V_{|\bm{v}|}(S)$.
If $\bm{v}'\neq \bm{p}_{i+1}$, $\bm{v}\stackrel{D}{\neq}\bm{v}'$ because $T_{|\bm{v}|}(\bm{v})=\bm{p}_{i+1}\neq \bm{v}'$, therefore $D(\bm{v}\bm{t})\neq D(\bm{v}'\bm{t})$ holds for $\bm{t}=\bm{r}$ or $\bm{t}=\dot{\bm{r}}$, where $\bm{r}$ is the label of the least common ancestor of $\bm{v}'$-labeled
and $\bm{p}_{i+1}$-labeled leaves in $T_{|\bm{v}|}$.
For $\bm{v}'=\bm{p}_{i+1}$,  $D(\bm{v}\bm{e}_{i+1})\neq D(\bm{p}_{i+1}\bm{e}_{i+1})$ holds, therefore $\bm{v}\stackrel{D}{\neq}\bm{p}_{i+1}$ holds.
Thus, CN (3) also holds for $\tilde{S}$.
Therefore, CN is satisfied by $(\tilde{S},\tilde{T}_1,\dots,\tilde{T}_m)$.
As for node classification trees, only $\bm{p}_{i+1}$-labeled leaf is replaced with $\bm{e}_{i+1}$-labeled single-test node with children labeled $\bm{v}$ and $\bm{p}_{i+1}$,
Thus $\tilde{T}_{|\bm{v}|}(\bm{u})=\bm{u}$ holds for all $\bm{u}\in V_{|\bm{v}|}(\tilde{S})$.
Since $\{\bm{a} \mid T_i(\bm{a})=\mu\}=\{\bm{a} \mid \tilde{T}_i(\bm{a})=\mu\}$, CT (2) still holds for $\tilde{T}_i$ ($i=1,2,\dots,m$).
Thus CT is also satisfied by $(\tilde{S},\tilde{T}_1,\dots,\tilde{T}_m)$.

In algorithm NewBranchingNode, the first new node $\bm{v}$ is added to $S$ at Line~\ref{alg:newbranching:addnode} or \ref{alg:dummy}.
In both the cases, $\bm{v}\in \mbox{nodes}(D)$ because $D(\bm{v}\bm{r})\neq D(\bm{v}\dot{\bm{r}})$ holds for $\bm{r}=\mbox{suf}(\bm{e},m-|\bm{v}|)$, which is guaranteed from the selection of $j$ at Line~\ref{alg:newbranching:j}, thus CN (1) holds for $\tilde{S}$.
Since $\bm{v}$ cannot be a sink because $j\geq 1$, CN (2) still holds for $\tilde{S}$.
By CE (2), $T_{|\bm{v}|}(\bm{v})=\mu$ holds, therefore $\bm{v}\stackrel{D}{\neq} \bm{v}'$ for any $\bm{v}'\in V(S)$ because $D(\bm{v}\bm{r})\neq D(\bm{v}'\bm{r})$ or $D(\bm{v}\dot{\bm{r}})\neq D(\bm{v}'\dot{\bm{r}})$
holds for label $\bm{r}$ of the least common ancestor of nodes labeled $\mu$ and $\bm{v}'$.
Thus CN (3) holds for $\tilde{S}$.
Trivially, $\tilde{T}_{|\bm{v}|}(\bm{v})=\bm{v}$ holds because $T_{|\bm{v}|}(\bm{v})=\mu$ is guaranteed by CE (2) and the $\mu$-labeled leaf in $T_{|\bm{v}|}$ is replaced with a twin-test node labeled $\bm{r}$
having outgoing edge labeled $(D(\bm{v}\bm{r}),D(\bm{v}\dot{\bm{r}}))$ to the leaf labeled $\bm{v}$. Since no other update is done to $T_{|\bm{v}|}$, so $\tilde{T}_{|\bm{v}|}(\bm{v}')=T_{|\bm{v}|}(\bm{v}')=\bm{v}'$ holds
for $\bm{v}'\in V_{|\bm{v}|}(S)$. Therefore CT (1) holds for $\tilde{T}_1,\tilde{T}_2,\dots,\tilde{T}_m$.
$\tilde{T}_{|\bm{a}|}(\bm{a})=T(\bm{a})$ holds for all $\bm{a}\in \{0,1\}^i, i=1,\dots,m$ except 
for some $\bm{a}\in \{0,1\}^{|\bm{v}|}$ satisfying $\mu=T_{|\bm{v}|}(\bm{a})\neq \tilde{T}_{|\bm{v}|}(\bm{a})=\bm{v}$, which belongs to $\mbox{nodes}(D)$ because $D(\bm{a}\bm{r})\neq D(\bm{a}\dot{\bm{r}})$ holds for
the label $\bm{r}$ of the twin-test node on the path from the root to the leaf labeled $\bm{v}$ in $\tilde{T}_{|\bm{v}|}$.
Thus CT (2) still holds for $\tilde{T}_{|\bm{v}|}$.
Therefore CT holds for $(\tilde{S},\tilde{T}_1,\dots,\tilde{T}_m)$.
\end{proof}

\begin{claim}\label{claim2}
Assume that at least one node addition and its accompanying node classification tree update has been already done for the current OMTBDDAS and classification trees $(\tilde{S},\tilde{T}_1,\dots,\tilde{T}_m)$ in algorithm Update-Hypothesis for $(S,T_1,...,T_m,\bm{e})$.
Also assume that $(\tilde{S},\tilde{T}_1,\dots,\tilde{T}_m)$ satisfies CN and CT.
Let $(\tilde{S}',\tilde{T}'_1,\dots,\tilde{T}'_m)$ be the OMTBDDAS and node classification trees after one node addition and its accompanying update of a node classification tree. Then, CN and CT are satisfied by $(\tilde{S}',\tilde{T}'_1,\dots,\tilde{T}'_m)$.
\end{claim}
\begin{proof}[Proof of Claim~\ref{claim2}]
Consider the node addition and the accompanying node classification tree update done at Lines~\ref{alg:nodesplit:addnode2} and \ref{alg:nodesplit:ctreeup2} in NodeSplit.
The added node $\bm{v}'$ is $\bm{v}_1l(\bm{v}_1,\bm{p}_{i+1})$ for some $\bm{v}_1\in V(S)$.
Then, $\bm{v}'\in \mbox{nodes}(D)$ by CT (2) because $T(\bm{v}')=T(\bm{v}_1l(\bm{v}_1,\bm{p}_{i+1}))=\bm{p}_{i+1}\neq \mu$ by CE (1).
Thus, CN (1) holds for $\tilde{S}'$.
CN (2) holds for $\tilde{S}'$ since node $\bm{v}'$ cannot be a sink because $|\bm{v}'|=|\bm{v}|$ and $\bm{v}$ is not a sink.
Furthermore, $\bm{v}'\stackrel{D}{\neq} \bm{u}$ holds for any $\bm{u}\in V(\tilde{S})$ because $\tilde{T}_{|\bm{v}'|}(\bm{v}')=\bm{e}_{i+1}$ (no $D(\bm{v}'\bm{e}_{i+1})$-labeled edge outgoing from the single-test node labeled $\bm{e}_{i+1}$ exists), which means that,
for any label $\bm{u}(\neq \mu)$ of a leaf, label $\bm{r}$ (or $\dot{\bm{r}}$) of the least common ancestor of $\bm{u}$-labeled leaf and $\bm{e}_{i+1}$-labeled single test node satisfies $D(\bm{v}'\bm{r})\neq D(\bm{u}\bm{r})$ (or $D(\bm{v}'\dot{\bm{r}})\neq D(\bm{u}\dot{\bm{r}})$).
Thus CN (3) is satisfied by $\tilde{S}'$.
CT (1) holds for $\tilde{T}'_1,\dots,\tilde{T}'_m$ because $\tilde{T}_{|\bm{v}'|}$ is updated so as to satisfy $\tilde{T}'_{|\bm{v}'|}(\bm{v}')=\bm{v}'$ by adding a leaf node labeled $\bm{v}'$ and an edge labeled $D(\bm{v}')$ from the single test node labeled $\bm{e}_{i+1}$ to the leaf.
Since $\{\bm{a} \mid \tilde{T}_i(\bm{a})=\mu\}=\{\bm{a} \mid \tilde{T}'_i(\bm{a})=\mu\}$, CT (2) still holds for $\tilde{T}'_i$ ($i=1,\dots,m$).

Let us consider other node additions and classification tree updates done at Lines~\ref{alg:nodesplit:addedge1} and \ref{alg:nodesplit:addedge2} in NodeSplit and
at Line~\ref{alg:newbranching:addedge} in NewBranchingNode, where the operations may be conducted repeatedly in algorithm AddEdge.
In AddEdge, a new node $\bm{u}'$ is added to $\tilde{S}$ at Line~\ref{alg:addedge:sink} or \ref{alg:addedge:internal}.
Node $\bm{u}'$ belongs to $\mbox{nodes}(D)$ because $\tilde{T}_{|\bm{u}'|}(\bm{u}')=\bm{u}\neq \mu$ (no $D(\bm{u}'\bm{u})$-labeled edge outgoing from the single-test node labeled $\bm{u}$ exists), so $|\bm{u}'|=m$ or $D(\bm{u}'\bm{r})\neq D(\bm{u}'\dot{\bm{r}})$ for label $\bm{r}$ of a twin-test node on the path from the root to the single-test node labeled $\bm{u}$.
Therefore CN (1) holds for $\tilde{S}'$.
In the case that $\bm{u}'$ is a sink, $\tilde{S}$ is updated so as to satisfy $\mathcal{D}(\tilde{S}')(\bm{u}')=D(\bm{u}')$ (Line~\ref{alg:addedge:sink})
because node $\bm{u}'$ is labeled by $D(\bm{u}')$ and $S(\bm{v})=\bm{v}$ is guaranteed for $\bm{v}$ satisfying $\bm{v}\mathrm{pre}(\bm{t},|\bm{u}'|-|\bm{v}|)=\bm{u}'$.
Thus, CN (2) is satisfied by $\tilde{S}'$.
Furthermore, $\bm{u}'\stackrel{D}{\neq} \bm{v}'$ holds for any $\bm{v}'\in V(\tilde{S})$ because $\tilde{T}_{|\bm{u}'|}(\bm{u}')=\bm{u}$ and $\bm{u}$ is the label of a single-test internal node, which means that,
for any label $\bm{v}'(\neq \mu)$ of a leaf, the label $\bm{r}$ (or $\dot{\bm{r}}$) of the least common ancestor of $\bm{v}'$-labeled node and the node labeled $\bm{u}$ satisfies $D(\bm{u}'\bm{r})\neq D(\bm{v}'\bm{r})$ (or $D(\bm{u}'\dot{\bm{r}})\neq D(\bm{v}'\dot{\bm{r}})$).
Therefore CN (3) is satisfied by $\tilde{S}'$.
$\tilde{T}_{|\bm{u}'|}$ is updated so as to satisfy $\tilde{T}'_{|\bm{u}'|}(\bm{u}')=\bm{u}'$ by adding a leaf node labeled $\bm{u}'$ and an edge labeled $D(\bm{u}'\bm{u})$ from the single-test node labeled $\bm{u}$ to the leaf. Therefore CT (1) holds for $\tilde{T}_1',\dots,\tilde{T}'_m$.
Since $\{\bm{a} \mid \tilde{T}_i(\bm{a})=\mu\}=\{\bm{a} \mid \tilde{T}'_i(\bm{a})=\mu\}$, CT (2) still holds for $\tilde{T}_1',\dots,\tilde{T}'_m$.
\end{proof}

We prove that CE holds for $(S',T'_1,\dots,T'_m)$.

First, consider the NodeSplit case.
In NodeSplit, the $\mu$-labeled leaf and the twin-test nodes on the path from the root to the leaf for any $T_i$ ($i=1,\dots,m$) does not change, which means that $T'_i(\bm{u})=\mu$ if and only if $T_i(\bm{u})=\mu$ for all $\bm{u}\in \{0,1\}^i, i=1,\dots,m$.
Therefore, CE (2) holds for $(\bm{u},\bm{v})\in E(S')\cap E(S)$.
Let $\bm{v}$ be the new node added to $S$ at Line~\ref{alg:nodesplit:addnode}.
Then, $T_i$ that are changed in NodeSplit are $T_{|\bm{v}|}$ and $T_j$ for some $j>|\bm{v}|$ through AddEdge.
In $T_{|\bm{v}|}$, the leaf labeled $\bm{p}_{i+1}$ is replaced with $T^{\bm{e}_{i+1}}_{|\bm{v}|}$ at Line~\ref{alg:nodesplit:Trep}.
Thus, edges $(\bm{v}',\bm{p}_{i+1})$ with $T_{|\bm{v}|}(\bm{v}'l(\bm{v}',\bm{p}_{i+1}))=\bm{p}_{i+1}$ and $T'_{|\bm{v}|}(\bm{v}'l(\bm{v}',\bm{p}_{i+1}))=\bm{v}\neq \bm{p}_{i+1}$ must be removed and
new edges $(\bm{v}',\bm{v})$ must be added.
All such edge removals and additions are done at Lines~\ref{alg:nodesplit:addnode}, \ref{alg:nodesplit:edgemove} and \ref{alg:nodesplit:addnode2}.
Thus, all the edges $(\bm{v}',\bm{p}_{i+1})$ that do not satisfy CE (1) are removed and all the added edges $(\bm{v}',\bm{v})$ satisfy CE (1).
Since strings $l(\bm{v}',\bm{v})$ for the added edges $(\bm{v}',\bm{v})$ are the same as strings $l(\bm{v}',\bm{p}_{i+1})$ for the removed edges $(\bm{v}',\bm{p}_{i+1})$ and $T'_i=T_i$ for $i<|\bm{v}|$,
CE (2) still holds for new edges $(\bm{v}',\bm{v})$.
Other new edges $(\bm{v}',\bm{u}')$ are added through AddEdge($\bm{v}',\bm{t}$) and those edges are made so as to satisfy CE (1) and CE (2).
Thus, CE is satisfied by $(S',T'_1,\dots,T'_m)$.

Next, consider the NewBranchingNode case.
Let $\bm{v}$ denote the new node added to $S$ at Line~\ref{alg:newbranching:addnode}.
Then, $T_j$ that are changed in NewBranchingNode are $T_{|\bm{v}|}$ and $T_j$ for some $j>|\bm{v}|$ through AddEdge.
In $T_{|\bm{v}|}$, the leaf labeled $\mu$ is replaced with $T^\bm{r}_{|\bm{v}|}$ at Line~\ref{alg:newbranching:Trep}.
Thus, edges $(\bm{u},\bm{v}')$ with $|\bm{u}|<|\bm{v}|<|\bm{v}'|$, $T_{|\bm{v}|}(\bm{u}\mathrm{pre}(l(\bm{u},\bm{v}'),|\bm{v}|-|\bm{u}|))=\mu$ and $T'_{|\bm{v}|}(\bm{u}\mathrm{pre}(l(\bm{u},\bm{v}'),|\bm{v}|-|\bm{u}|))=\bm{v}$ must be removed and
new edges $(\bm{u},\bm{v})$ must be added.
All such edge removals and additions are done at Lines~\ref{alg:newbranching:addnode} and \ref{alg:newbranching:edgeremadd}.
Thus, all the edges $(\bm{u},\bm{v}')$ that do not satisfy CE (2) are removed and all the added edges $(\bm{u},\bm{v})$ satisfy CE (1).
Since strings $l(\bm{u},\bm{v})$ for the added edges $(\bm{u},\bm{v})$ are the same as strings $\mathrm{pre}(l(\bm{u},\bm{v}'),|\bm{v}|-|\bm{u}|)$ for the removed edges $(\bm{u},\bm{v}')$ and $T'_j=T_j$ for $j<|\bm{v}|$,
CE (2) still holds for the added edges $(\bm{u},\bm{v})$ .
Edge $(\bm{v},\bm{p}_{i+1})$ is also added at Line~\ref{alg:newbranching:addnode}.
Since $\bm{v}l(\bm{v},\bm{p}_{i+1})$ in $S'$ is equal to $\bm{p}_il(\bm{p}_i,\bm{p}_{i+1})$ in $S$ and the nodes on the path from the root to $\bm{p}_{i+1}$-labeled leaf in $T_{|\bm{p}_{i+1}|}$ does not change in NewBranchingNode, $T'_{|\bm{p}_{i+1}|}(\bm{v}l(\bm{v},\bm{p}_{i+1}))$ for $S'$ is equal to $T_{|\bm{p}_{i+1}|}(\bm{p}_il(\bm{p}_i,\bm{p}_{i+1}))$ for $S$, thus
$T'_{|\bm{p}_{i+1}|}(\bm{v}l(\bm{v},\bm{p}_{i+1}))=\bm{p}_{i+1}$ by CE (1) for $(S,T_1,\dots,T_m)$.
Since strings $l(\bm{v},\bm{p}_{i+1})$ for the added edges $(\bm{v},\bm{p}_{i+1})$ are the same as strings $\mathrm{suf}(l(\bm{p}_i,\bm{p}_{i+1}),|\bm{p}_{i+1}|-|\bm{v}|)$ for the removed edges $(\bm{p}_i,\bm{p}_{i+1})$ and no node on the path from the root to the $\mu$-labeled leaf in $T_j$ for $j>|v|$ changes,
CE (2) still holds for the added edges $(\bm{v},\bm{p}_{i+1})$.
Other new edges $(\bm{v}',\bm{u}')$ for some $|\bm{v}|\leq|\bm{v}'|< |\bm{u}'|$ are added at Line~\ref{alg:newbranching:addedge} through AddEdge($\bm{v},\bm{r}$) and those edges are made so as to satisfy CE (1) and CE (2).
Thus, CE is satisfied by $(S',T'_1,\dots,T'_m)$.
\end{proof}

\begin{proof}[Proof of Theorem~\ref{th}]
We assume that $D$ has at least two sinks because the case with one sink is trivial.
First, we prove that QLearn-OMTBDD outputs $D$ with at most $n$
equivalence queries.
It is trivial that $S^0$ and 
$T_1^0,...,T_m^0$ satisfy the conditions CN, CT and CE.
By Lemma~\ref{lem:2}, the number of non-dummy nodes of 
the OMTBDDAS $S$ maintained by the algorithm 
increases by at least one for every execution of algorithm Update-Hypothesis,
which updates $S$ and node classification trees $T_1,...,T_m$
so as to satisfy CN, CT and CE.
Thus, $|V(S)|$ reaches just the number of nodes in $D$
after executing algorithm Update-Hypothesis at most $n-3$ times.
Then, Lemma~\ref{lem:fcond} guarantees that $\mathcal{D}(S)=D$.
Therefore, the $n'$th equivalence query by QLearn-OMTBDD for $n'\leq n$
is answered with `YES'
because three equivalence queries are asked before the first execution of
Update-Hypothesis and at most one equivalence query is asked after each execution of Update-Hypothesis.

Next, we consider the number of membership queries.
To construct $S^0$ and $T_1^0,...,T_m^0$,
the algorithm uses $\lceil \log_2 m\rceil$ membership queries.
Let us consider how many membership queries are asked in one execution of
Update-Hypothesis per one node addition to $S$.

\noindent
\underline{Case with NodeSplit execution in Update-Hypothesis}

At Line~\ref{alg:find-i} of Update-Hypothesis,
at most $\lceil \log_2 m \rceil$ membership queries are asked.
At Line~\ref{alg:nodesplit:memq} of NodeSplit, at most $n$ membership queries are asked because
at most one membership query is asked for each node in $S$.
In each execution of Line~\ref{alg:nodesplit:addedge1} of NodeSplit, at most $4n$ membership queries are asked if no node is added in AddEdge
because at most two membership queries are asked for each internal node of
$T_1,...,T_m$ and the number of internal nodes is at most $n$.
In this case, the algorithm asks at most $\lceil \log_2 m \rceil +5n$ membership queries in total if there is no node addition at Line~\ref{alg:nodesplit:addnode2} and in AddEdge.
At most $4n$ additional membership queries are asked per one node addition at Line~\ref{alg:nodesplit:addnode2} and at most $2n$ 
additional membership queries are asked per one node addition in AddEdge.
Thus, at most $\lceil \log_2 m \rceil +5n$ membership queries are asked per one node addition to $S$.

\noindent
\underline{Case with NewBranchingNode execution in Update-Hypothesis}

  At Line~\ref{alg:find-i} of Update-Hypothesis and
Line~\ref{alg:newbranching:j} of NewBranchingNode, at most $2\lceil \log_2 m\rceil$ 
membership queries are asked by using a binary search.
At Line~\ref{alg:newbranching:memq} of NewBranchingNode, at most $4n$ membership queries are asked because
at most two membership queries are asked for each edge in $S$.
At Line~\ref{alg:newbranching:addedge} of NewBranchingNode, at most $2n$ membership queries are asked if no node is added in AddEdge
for the reason described above.
In this case, the algorithm asks at most $2\lceil \log_2 m\rceil +6n$ membership queries in total if there is no node addition in AddEdge.
Even in the case that there are node additions, at most $2n$ additional membership queries are asked per one node addition to $S$. 
Thus, at most $2\lceil \log_2 m\rceil +6n$ membership queries are totally asked per one node addition to $S$.

Thus, QLearn-OMTBDD asks at most $2n(\lceil \log_2 m \rceil +3n)$ membership queries.
\end{proof}
\section{Random OMTBDD Generation Procedure}\label{sec:OMTBDDgen}

In our experiment for synthetic dataset, OMTBDDs are generated by Algorithm~\ref{genalgo}.

\begin{algorithm}[h]
  \caption{OMTBDD Generation Algorithm}\label{genalgo}
  \begin{algorithmic}[1]
\REQUIRE $m$: number of variables, $n$: number of nodes, $K$: number of sinks
\ENSURE $D$: reduced OMTBDD with $n$ nodes and (at most) $K$ sinks for $m$ variables of ordering $x_1<x_2<\cdots<x_m$.
\STATE $n', n''\gets n$
\REPEAT
\STATE $n'\gets n'+(n-n'')$ 
\STATE Select $n'-K$ variables from $\{x_1,\dots,x_m\}$ randomly and sort them to $x_{i_1},x_{i_2},\dots,x_{i_{n'-K}}$ so as to satisfy $i_1\leq i_2\leq \cdots \leq i_{n'-K}$.
 Create node $v_j$ labeled $x_{i_j}$ for $j=1,\dots,n'-K$ and sink $v_j$ for $j=n'-K+1,\dots,n'$.   
 \FOR{$j=1$  \TO  $n'-K$}
 \IF{$j>1$ and $v_j$ has no incoming edge}
 \STATE Delete node $v_j$. Proceed to the next $j$ in the FOR-loop.
 \ENDIF
 \STATE Set $\ell$ to $0$ or $1$ randomly.
 \FOR{$k=\ell$ \TO $(\ell+1)\% 2$}
 \IF{$\exists v_j$ s.t. $\text{n\_var}(j)\leq h< \text{n\_var}(\text{n\_var}(j))$ and $v_h$ has no incoming edge
 \COMMENT{$\text{n\_var}(j)=\min(\{h\mid i_h>i_j\}\cup\{n'-K+1\})$}}
\STATE Add edge $(v_j,v_h)$ for $h$ that is randomly selected from $\{\text{n\_var}(j),\dots,\text{n\_var}(\text{n\_var}(j))\}$ as the $k$-labeled outgoing edge of $v_j$. 
\ELSE 
\STATE Add edge $(v_j,v_h)$ for $h$ that is randomly selected from $\{\text{n\_var}(j),\dots,n\}$ as the $k$-labeled outgoing edge of $v_j$. Reselect $h$ if $k=(\ell+1)\%2$ and the same edge as that for $k=\ell$ is selected.
\ENDIF
\ENDFOR
\ENDFOR
\STATE Set the label of sink node $v_j$ ($j=n'-K+1,\dots, n'$) to $0,1,\dots K-2$ or $K-1$ at random such that each sink node label is distinct.
\STATE Transform the current OMTBDD into the unique OMTBDD $D$ in the reduced form.
Set $n''$ to the number of nodes in $D$.
\UNTIL{$n''=n$}
\STATE Output $D$.
\end{algorithmic}
\end{algorithm}
\end{document}